\def\BibTeX{{\rm B\kern-.05em{\sc i\kern-.025em b}\kern-.08em
    T\kern-.1667em\lower.7ex\hbox{E}\kern-.125emX}}
\newtheorem{theorem}{Theorem}[section]
\newtheorem{prop}[theorem]{Proposition}
\newtheorem{problem}{Problem}
\newtheorem{rem}[theorem]{Remark}
\newtheorem{ex}[theorem]{Example}
\begin{document}

\title{ConformalNL2LTL: Translating Natural Language Instructions into Temporal Logic Formulas with Conformal Correctness Guarantees}
\author{David Smith Sundarsingh$^*$, Jun Wang$^*$, Jyotirmoy V. Deshmukh, Yiannis Kantaros
\thanks{$^*$ indicates equal contribution.}
\thanks{D. S. Sundarsingh, J. Wang, and Y. Kantaros are with Department of Electrical and Systems Engineering, Washington University in St Louis, MO, 63108, USA. Emails: $\left\{\text{d.s.sundarsingh, junw, ioannisk}\right\}$@wustl.edu. J. Deshmukh is with Department of Computer Science, University of Southern California, Los Angeles, CA 90089, USA. Email: $\text{jdeshmuk}$@usc.edu.}
\thanks{D.S.Sundarsingh, J. Wang, and Y. Kantaros were partially supported by the ARL grant DCIST CRA W911NF-17-2-0181 and the NSF award CNS \#2231257. J. Deshmukh was partially supported through the following NSF grants: CAREER award (SHF-2048094), CNS-2039087, SLES-2417075, finding by Toyota R$\&$D through the USC Center for Autonomy and AI, and the Airbus Institute for Engineering Research. 
}
\thanks{$^1$Additional materials, including videos and code, are available on the project webpage at: \textbf{conformalnl2ltl.github.io}.}
}

\maketitle

\begin{abstract}
Linear Temporal Logic (LTL) is a widely used task specification language for autonomous systems. To mitigate the significant manual effort and expertise required to define LTL-encoded tasks, several methods have been proposed for translating Natural Language (NL) instructions into LTL formulas, which, however, lack correctness guarantees.
To address this, we propose a new NL-to-LTL translation method, called ConformalNL2LTL that achieves user-defined translation success rates on unseen NL commands. Our method constructs LTL formulas iteratively by solving a sequence of open-vocabulary question-answering (QA) problems using large language models (LLMs). These QA tasks are handled collaboratively by a primary and an auxiliary model. The primary model answers each QA instance while quantifying uncertainty via conformal prediction; when it is insufficiently certain according to user-defined confidence thresholds, it requests assistance from the auxiliary model and, if necessary, from the user.
We demonstrate theoretically and empirically that ConformalNL2LTL achieves the desired translation accuracy while minimizing user intervention.
\end{abstract}

\begin{IEEEkeywords}
Formal Methods in Robotics and Automation, Autonomous Agents, AI-Enabled Robotics 
\end{IEEEkeywords}


\vspace{-0.2cm}
\section{Introduction}
\vspace{-0.1cm}


\IEEEPARstart{L}{inear} Temporal Logic (LTL) has become a powerful formalism for specifying a wide range of robotic missions beyond simple reach-avoid requirements, such as surveillance, coverage, and data-gathering \cite{baier2008principles}.
Several task and motion planners have been developed for robots with LTL-encoded tasks, offering correctness and optimality guarantees. These planners have demonstrated success in handling known environments \cite{vasile2013sampling, tumova2016multi, kantaros2020stylus,liu2024time,luo2021abstraction,gujarathi2022mt,fang2024continuous}, unknown static environments \cite{guo2015multi,livingston2013patching,Kantaros2022perception}, and dynamic or uncertain environments \cite{Kalluraya2023multi,articleT-ASE,zhou2023vision,10571830, 10937075}, including scenarios where system dynamics are unknown
\cite{kantaros2024sample,balakrishnan2023model,10990233,hasanbeig2019reinforcement} or robot capabilities may fail unexpectedly \cite{Zhou2022Reactive,kalluraya2023resilient,Feifei2022failure,Faruq2018Simultaneous,10955702, 7778995}.

Despite the remarkable performance of LTL-based planners, a key limitation is the significant manual effort and domain expertise required to specify missions as temporal logic formulas. As a result, natural language (NL) has emerged as an appealing alternative for mission specification. 
Several algorithms have been proposed that leverage Large Language Models (LLMs) to convert NL instructions into temporal logic formulas \cite{chen2024autotamp, fuggitti2023nl2ltl, chen2023nl2tl, liu2023grounding, cosler2023nl2spec,xu2024learning,pan2023data,mohammadinejad2024systematic}, which then serve as task specification inputs to the above-mentioned temporal logic planners.
However, these algorithms lack correctness guarantees, meaning the resulting LTL formula may not accurately reflect the original NL task. This misalignment can result in robot plans that fail to meet the original NL instructions.

To address this limitation, we propose a new translation method, ConformalNL2LTL (Conformal translation of NL commands to LTL formulas), which achieves user-defined translation success rates on previously unseen NL instructions. We consider a robot with a known skill set (e.g., manipulation, sensing, and mobility) tasked with executing high-level NL missions in environments containing semantic objects and regions. Each NL task specifies how the robot should apply its skills to particular objects or regions in a prescribed temporal and logical order. ConformalNL2LTL translates NL commands into LTL formulas defined over atomic propositions (APs), i.e., Boolean variables, that are grounded in the robot’s capabilities and the environment.
%
%
\begin{figure}[t] 
\centering
\includegraphics[width=\linewidth]{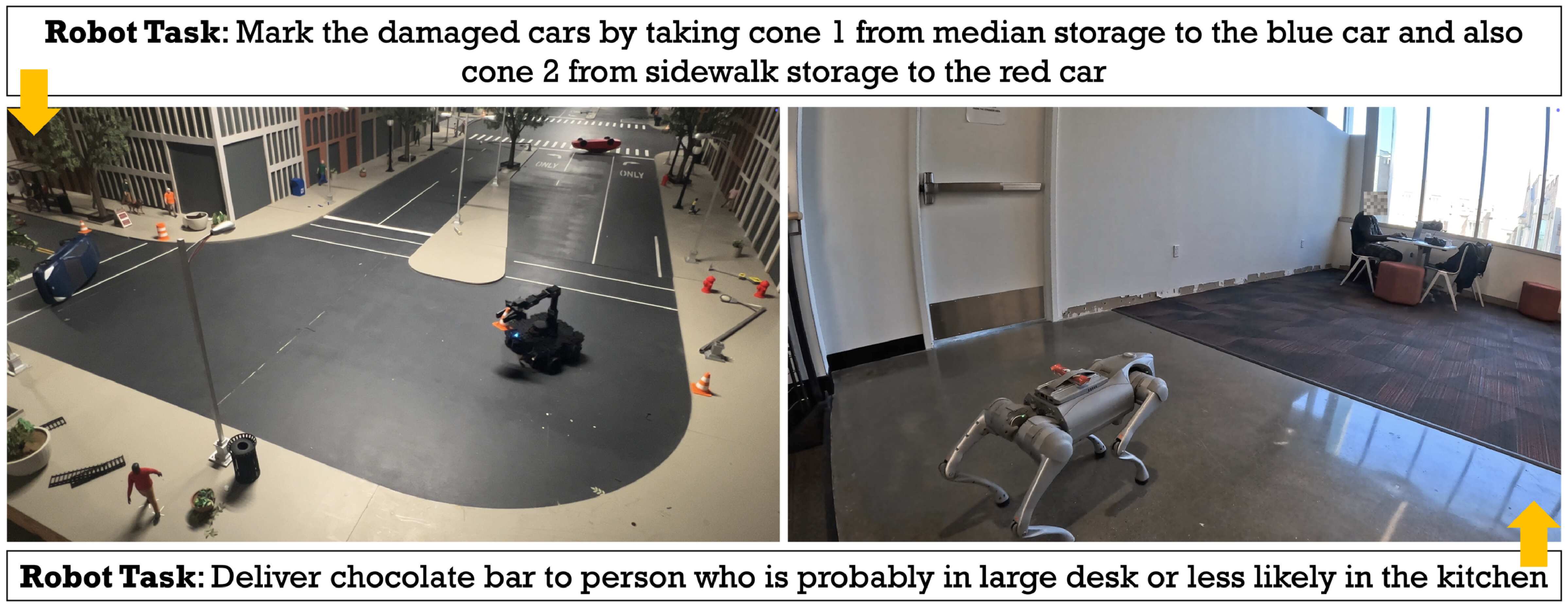} \vspace{-0.8cm}
\caption{ConformalNL2LTL translates natural language-based robot tasks into Linear Temporal Logic formulas with translation correctness guarantees; {see demonstrations in 
\cite{video_demo}.
}}
\label{fig:env}\vspace{-0.7cm}
\end{figure}
Our approach uses pre-trained LLMs as backbone translators, augmented with Conformal Prediction (CP) \cite{shafer2008tutorial,angelopoulos2021gentle}, an uncertainty-quantification framework that plays a central role in achieving user-specified translation success rates. The key idea is to construct the LTL formula iteratively by framing the translation problem as a sequence of interdependent question–answering (QA) tasks, whose answers collectively determine the final specification. These QA tasks are solved collaboratively by two LLMs, referred to as the primary and auxiliary models, with CP coordinating their interaction.
Each QA step incorporates the NL instruction together with the partial LTL formula constructed so far, and prompts the primary LLM to propose either a temporal/logical operator or an AP to extend the specification. A naïve approach would be to query the primary model once and accept its response; however, this ignores the model’s confidence in the correctness of its response. To enable confidence-aware construction of LTL formulas, inspired by \cite{su2024api,cheng2024efficienteqa,kaur2024addressing}, we sample multiple responses per question and use their empirical frequencies as a coarse proxy for model confidence. Rather than selecting the response with the highest empirical confidence (as in, e.g., \cite{ahn2022can,cheng2024efficienteqa}), we employ CP to calibrate this uncertainty. This step is crucial, as LLMs often produce incorrect outputs with high confidence \cite{ren2023robots}, which can result in erroneous translations. Using CP and the responses of the primary model, we construct for each QA task a prediction set that contains the correct operator or AP with a user-specified confidence. When this prediction set is a singleton, the algorithm proceeds with that choice. When the set is non-singleton—indicating high uncertainty—the primary model seeks assistance. Help is first requested from the auxiliary LLM, which is queried in the same manner to produce its own prediction set. We then intersect the prediction sets of the two models. If the intersection is a singleton, the corresponding response is selected; if it remains non-singleton, the user is asked to choose among the remaining options. If the intersection is empty or does not contain a correct response, the user terminates the translation process.

We provide theoretical guarantees showing that ConformalNL2LTL achieves user-specified translation success rates under the standard distributional assumptions required for CP. Extensive numerical experiments further demonstrate that our method attains high success levels (e.g., 99\%) on unseen NL instructions while requiring very low rates of user intervention (e.g., 0.378\%). We also show that ConformalNL2LTL outperforms existing translation methods \cite{liu2023grounding,cosler2023nl2spec}, which do not explicitly account for uncertainty, and that it continues to outperform these baselines even when the CP-based help mechanism is disabled and only the highest-confidence responses from the primary model are used. For completeness, we compare an integration of ConformalNL2LTL with the TL-RRT* planner \cite{luo2021abstraction} against methods that map NL instructions directly to robot plans, bypassing 
the translation to LTL formulas
\cite{chen2023scalable}, and observe similar conclusions. We further demonstrate the planning capabilities of the integrated framework for language-instructed robots; see Fig.~\ref{fig:env}. Finally, we empirically demonstrate strong performance of ConformalNL2LTL under distribution shifts that violate CP assumptions and compromise our theoretical guarantees.

\textbf{Related Works}: 
(i) \textit{NL-to-LTL Translation:} As discussed earlier, several NL-to-LTL translation methods have been proposed, generating LTL formulas that are then used as inputs to existing planners \cite{chen2024autotamp,fuggitti2023nl2ltl,chen2023nl2tl,liu2023grounding,cosler2023nl2spec,xu2024learning,pan2023data,mohammadinejad2024systematic}. However, unlike ConformalNL2LTL, these approaches are uncertainty-agnostic: they treat LLM outputs as ground truth, ignoring inherent model uncertainty and confidence, and consequently provide no guarantees on translation correctness.

\textit{(ii) NL-based Planners:} Related to our work are NL-based planners that bypass the construction of LTL formulas altogether. These approaches use LLMs or Vision-Language Models to map NL instructions directly to control or action sequences \cite{hunt2025surveylanguagebasedcommunicationrobotics,Pallagani_Muppasani_Roy_Fabiano_Loreggia_Murugesan_Srivastava_Rossi_Horesh_Sheth_2024,chen2023scalable,9197464,latif20243pllmprobabilisticpathplanning,10160591,zhang2023building,yang2024text2reaction,hong2023metagpt,shah2023navigationlargelanguagemodels}. 
While such methods demonstrate strong empirical performance, they similarly lack guarantees on task success or mission-level correctness. Moreover, their end-to-end nature offers limited interpretability. In contrast, translation-based frameworks, including ours, can be modularly composed with existing planning and control stacks \cite{vasile2013sampling, tumova2016multi, kantaros2020stylus,liu2024time,luo2021abstraction,gujarathi2022mt,fang2024continuous,guo2015multi,livingston2013patching,Kantaros2022perception,Kalluraya2023multi,zhou2023vision,kantaros2024sample,balakrishnan2023model,hasanbeig2019reinforcement,wang2023task,Zhou2022Reactive,kalluraya2023resilient,Feifei2022failure,Faruq2018Simultaneous}, providing traceability and allowing users to localize errors to specific stages (e.g., translation vs. planning).

(iii) \textit{CP for Verified Autonomy:} CP has been widely used for uncertainty quantification in autonomy tasks, including object tracking \cite{su2024collaborative}, perception \cite{dixit2024perceive}, trajectory prediction \cite{lindemann2023safe}, reachability analysis \cite{hashemi2023data}, and decision-making \cite{sun2023conformal,huriot2025safe}. More recently, CP has also been used to quantify uncertainty in LLM-based planners \cite{ren2023robots,heracles,wang2024safe,ren2024explore,liang2024introspective}. However, these works are limited to open-source LLMs where logit/confidence scores are available. CP for closed-source LLMs has been explored in \cite{su2024api,kaur2024addressing}, but only for single QA tasks. Our work extends these efforts by enabling CP-based uncertainty quantification for both open- and closed-source LLMs and by handling a sequence of interdependent QA tasks that are necessary for constructing LTL formulas. Additionally, these existing CP-enabled LLM-based approaches rely solely on a user to resolve high-uncertainty cases, resulting in frequent human interventions, especially as the desired success probability increases, thus limiting autonomous deployment \cite{ren2023robots,heracles,wang2024safe,ren2024explore,liang2024introspective}. We address this limitation by invoking an auxiliary LLM to resolve high-uncertainty cases before involving the user, yielding a significant reduction (e.g., from $36.5\%$ to $4\%$) in the proportion of formulas that required help in their construction from the user at least once, as shown empirically (see Section \ref{sec:auxModelHelp}). Finally, to the best of our knowledge, this work is the first to apply CP to LLM-based NL-to-LTL translation.

\textbf{Contribution}: The contribution of this paper can be summarized as follows. \textit{First}, we propose ConformalNL2LTL, the first NL-to-LTL translation algorithm capable of achieving user-specified translation success rates on unseen NL instructions. 
\textit{Second}, we present the first application of CP to LLMs working collaboratively for NL-to-LTL translation tasks, with the added benefit that our method can be applied to both open- and closed-source LLMs.
\textit{Third}, we provide both theoretical and empirical analyses demonstrating that ConformalNL2LTL can achieve user-specified translation success rates while maintaining low help rates from users. \textit{Fourth}, we release the software implementation of ConformalNL2LTL, along with datasets developed for calibration and testing purposes \cite{code_demo} 

\section{Problem Statement}\label{sec:problem}

\textbf{Robot and Environment Modeling:} Consider a robot equipped with a \textit{known} set of skills/actions, denoted as $\mathcal{A} = \{a_1, \dots, a_A\}$ (e.g., `take a picture', `grab', or `go to') operating in an environment $\Omega=\{l_1,\ldots,l_M\}$ that contains $M > 0$ semantic objects and regions (called, hereafter, landmarks) $l_j$. Each landmark $l_j$ is determined by its position and its semantic label (e.g., `box', `car', `kitchen'). The robot's skills can be applied to $l_j$ (e.g., `go to $l_j$' or `grab $l_j$').

\textbf{Natural Language Tasks:} The robot is assigned a task $\xi$, expressed in natural language (NL), which may comprise multiple sub-tasks. The task $\xi$ is grounded in the robot skill set $\ccalA$ and a subset of the environment $\Omega$, meaning it can be accomplished by executing a sequence of actions selected from $\ccalA$ on some of the landmarks within $\Omega$; see Ex. \ref{ex1}. To formalize this, we consider a distribution $\ccalD$ over translation scenarios $\sigma_i=\{\xi_i, \ccalA_i\}$, where $\xi_i$ is defined over an unknown environment $\Omega_i$; see also Remark \ref{rem:Distr}. The subscript $i$ is used to emphasize that the NL-encoded task and the robot skills can vary across scenarios. When it is clear from the context, we drop the dependence on $i$. Note that $\ccalD$ is unknown but we assume that we can sample i.i.d. scenarios from it.  

\textbf{Linear Temporal Logic:}  LTL is a formal language that comprises a set of atomic propositions (APs) $\pi$ (i.e., Boolean variables), collected in a set denoted by $\mathcal{AP}$, Boolean operators, (i.e., conjunction $\wedge$, and negation $\neg$), and temporal operators, such as \textit{always} $\square$, \textit{eventually} $\lozenge$, and until $\mathcal{U}$.  
{We focus on LTL-encoded robot tasks defined over APs grounded in the robot's skill set $\ccalA_i$ and environment $\Omega_i$. Specifically, we consider APs that are true when a skill $a_i$ (e.g., `pick up') is applied to a landmark $l_j$ (e.g., `package'); see also Remark \ref{rem:Distr}. These LTL formulas $\phi$ are satisfied by robot plans $\tau$, which are sequences of robot states and corresponding actions. A formal presentation of the syntax and semantics of LTL, as well as robot plans satisfying LTL tasks can be found in \cite{baier2008principles,kalluraya2023resilient}; designing plans $\tau$ is out of the scope of this work.}
\textbf{Problem Statement:}
Our goal is to design a probabilistically correct translation algorithm that converts NL instructions $\xi_i$, contained in translation scenarios $\sigma_i = \{\xi_i, \mathcal{A}_i\}$ sampled from a distribution $\mathcal{D}$, into semantically equivalent LTL-encoded tasks $\phi_i$, with probability at least $(1-\alpha)$ over scenarios drawn from $\mathcal{D}$, for a user-specified $\alpha \in (0,1)$. In other words, the desired translation success rate over $\mathcal{D}$ is at least $(1-\alpha)\cdot 100$\%. Given a scenario $\sigma_i = \{\xi_i, \mathcal{A}_i\}$, we say that an LTL formula $\phi_i$ is semantically equivalent to $\xi_i$ if any plan $\tau$ satisfying $\phi_i$ also satisfies $\xi_i$, and vice versa. When multiple LTL formulas are semantically equivalent to $\xi_i$, our goal is to compute one such formula.
%
%
%
The problem that the paper addresses can be formally stated as follows:
\begin{problem}\label{prob:main}
{Design a translation algorithm that, given a translation scenario $\sigma=\{\xi,\ccalA\}$ sampled from an unknown distribution $\mathcal{D}$, generates an LTL formula $\phi$ such that 
\begin{align}
\mathbb{P}_{\sigma\sim\ccalD}(\phi\equiv\xi)\geq 1-\alpha,
\end{align}
for a user-specified threshold $1-\alpha\in(0,1)$ (where $\equiv$ stands for semantic equivalence).
}
\end{problem}

\begin{ex}[NL-to-LTL]\label{ex1}
    Consider an example of a mobile manipulator with action space $\mathcal{A}=$ \{move to, pick up, put down\}, and NL task $\xi =$ `Pick up the red box and place it in storage'. 
    Our proposed algorithm will infer that the corresponding LTL formula should be defined over three APs: {(i) $p\_red\_box$ which is true if the `red box' was `picked up' by the robot; 
    (ii) $storage$ which is true if the robot is in the `storage' area; and (iii) $pd$ which is true if the carried object is placed.}
    %
    This will result in the following LTL formula $\phi = \lozenge(p\_red\_box\wedge\lozenge(storage\wedge pd))$
    which is satisfied when $p\_red\_box$ becomes true first, followed by satisfaction of both $storage$ and $pd$. 
\end{ex}

\begin{rem}[Distribution $\ccalD$]\label{rem:Distr}
Note that $\ccalD$ is defined over $\sigma$, and not on specific environment instantiations $\Omega$. 
Instead, the randomness of $\Omega$ is implicitly accounted for when sampling the task $\xi$. This means that $\Omega$ may vary across NL tasks $\xi_i$ sampled from $\ccalD$. Similarly, $\ccalD$ does not depend on a predefined set of APs for constructing $\phi$. 
\textcolor{black}{Instead, the translation algorithm is tasked with inferring the set $\mathcal{AP}$ from $\xi$ and the provided skill set $\ccalA$. In practice, this means that users only need to specify the skill set $\ccalA$ of the platform and the NL instructions $\xi$, making our algorithm environment-agnostic. This enhances suitability for open-world task settings and also improves usability by eliminating the need to manually define APs or environments for each new scenario $\sigma$.}
\end{rem}

\section{Translating Natural Language to Linear Temporal Logic with Correctness Guarantees}\label{sec:method}
In this section, we present ConformalNL2LTL, our proposed translation framework to address Problem~\ref{prob:main}.
In Section~\ref{sec:Translation}, we provide an overview of the framework and highlight its key ideas.
Then, in Section~\ref{sec:primaryLLM}, we show how the translation problem can be formulated as a sequence of question–answering (QA) tasks, which are solved by a pre-trained LLM in an uncertainty-aware manner via conformal prediction (CP).
In Section~\ref{sec:auxLLM}, we explain how ConformalNL2LTL manages high-uncertainty cases, flagged by CP, by querying an auxiliary LLM and, if needed, deferring to a user.
Finally, in Section~\ref{sec:cp}, we discuss how CP is used to quantify the uncertainty of the LLMs involved in solving these QA tasks.



\begin{figure}[t] 
\centering
\includegraphics[width=\linewidth]{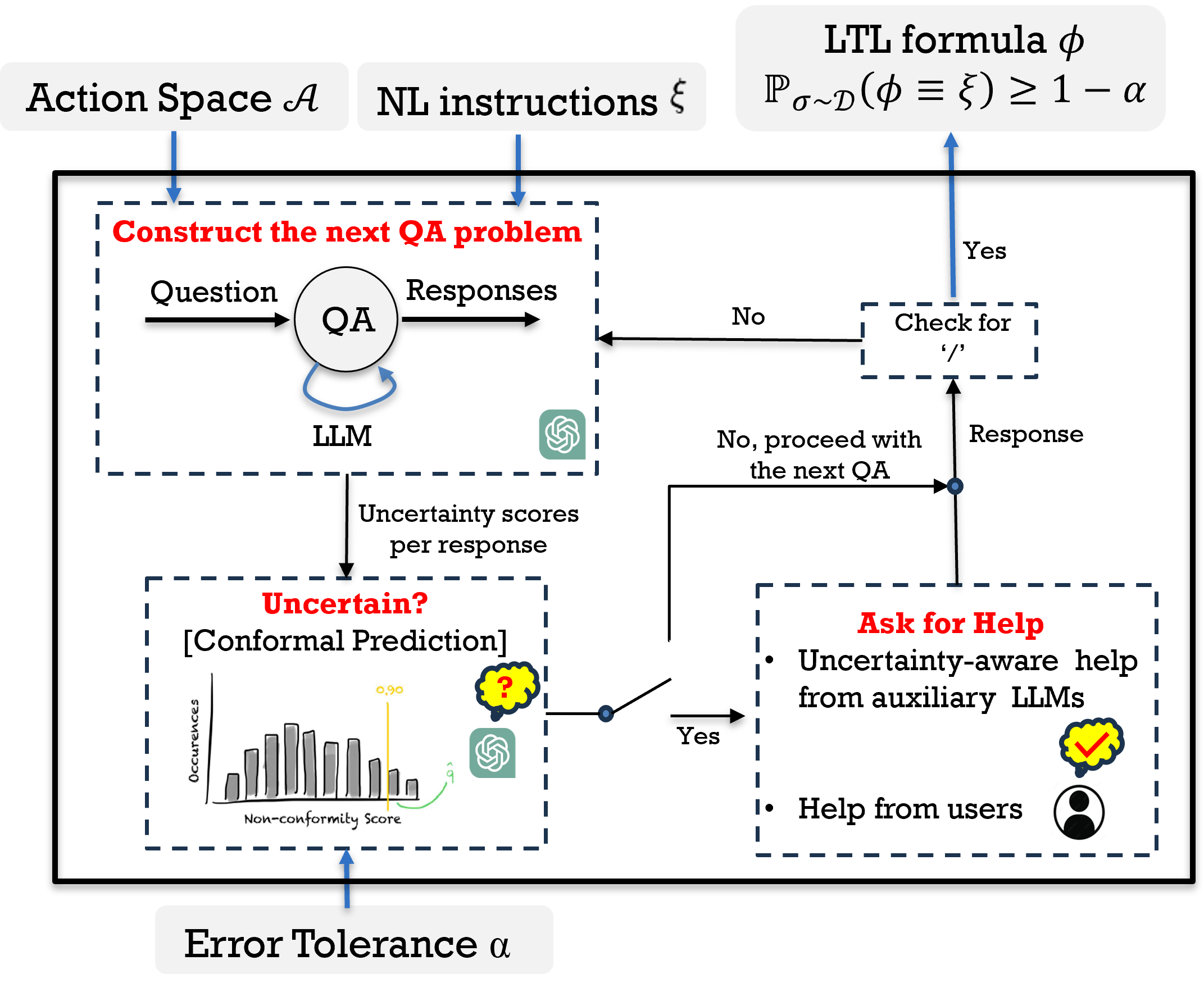}\vspace{-0.6cm}
\caption{Overview of ConformalNL2LTL. Given a translation scenario $\sigma=\{\xi,\ccalA\}$ and a threshold $\alpha$, a sequence of QA problems are  solved by an LLM. At every step, if the LLM is uncertain about its response, it asks for help from an auxiliary LLM, and a user, if required. If the response to a QA problem is '/', then the LTL formula $\phi$ is built by concatenating the responses generated at every QA step.}
\label{fig:overview}
\end{figure}

\begin{algorithm}[t]
\footnotesize
\caption{ConformalNL2LTL}\label{alg:translation}
\begin{algorithmic}[1]
\State \textbf{Input}: NL Task $\xi$; Coverage level $1-\alpha$; Primary LLM $\psi_{\text{p}}$; Auxiliary LLM $\psi_{\text{aux}}$, NCS quantile $\bar{q}$, semantic threshold $\zeta$
\State Initialize $k=1$ and prompt $\ell(1)$ with formula $\phi(0)=\varnothing$\label{algo1:prompt_init}
\While{`$/$'$\not\in\phi(k-1)$}
    \State $\mathcal{C}(\ell(k),{\psi_\text{p}})=\texttt{PredSet}(\ell(k),{\psi_\text{p}},\bar{q},\zeta)$ (See Alg.~\ref{algo3:algorithm})\label{algo1:pred_set}

    \If{$|\mathcal{C}(\ell(k),{\psi_\text{p}})|>1$}\label{algo1:HelpTriggered}
        \State $\mathcal{C}(\ell(k),\psi_{\text{aux}})=\texttt{PredSet}(\ell(k),\psi_{\text{aux}},\bar{q},\zeta)$ (See Alg.~\ref{algo3:algorithm})\label{algo:auxPred}
        \State $\mathcal{C}_{\text{inter}}(\ell(k))=\mathcal{C}(\ell(k),{\psi_\text{p}})\cap\mathcal{C}(\ell(k),\psi_{\text{aux}})$\label{algo:intersection}

        \If{$|\mathcal{C}_{\text{inter}}(\ell(k))|>1$}\label{algo:humanHelp}
            \State {Ask human operator to select $s(k)$ from $\mathcal{C}_{\text{inter}}(\ell(k))$}\label{algo1:HelpHuman2}
        \ElsIf{$\mathcal{C}_{\text{inter}}(\ell(k))=\emptyset$ {\text{or} `correct response $\not\in$ $\mathcal{C}_{\text{inter}}(\ell(k))$'}}
            \State Halt and declare translation as failed\label{algo1:halt}
        \Else
            \State Pick the (unique) decision $s(k)\in\mathcal{C}_{\text{inter}}(\ell(k))$\label{algo:finalPick}
        \EndIf

    \ElsIf{$\mathcal{C}(\ell(k),\psi_{\text{p}})=\emptyset$ 
    }
        \State Halt and declare translation as failed\label{algo1:haltprimary}

    \Else
        \State Pick the (unique) decision $s(k)\in\mathcal{C}(\ell(k),{\psi_\text{p}})$\label{algo1:pickAction}
    \EndIf

    \State Update $\phi(k)=\phi(k-1)+s(k)$ \label{algo1:formula_update}
    \State Construct $\ell(k+1)$ using $\ell(k)$ and $\phi(k)$\label{algo1:constrPrompt}
    \State Update $k=k+1$ \label{algo1:Updatek}
\EndWhile
\State \textbf{Output}: LTL formula $\phi$
\end{algorithmic}
\end{algorithm}
\normalsize

\vspace{-0.3cm}
\subsection{Overview of ConformalNL2LTL}\label{sec:Translation}

\textcolor{black}{In this section, we provide an overview of ConformalNL2LTL summarized also in Alg. \ref{alg:translation} and Figure \ref{fig:overview}.
Consider a translation scenario $\sigma=\{\xi,\ccalA\}$ drawn from $\ccalD$. A key idea of ConformalNL2LTL is to frame the translation process as a sequence of interdependent question-answering (QA) problems, each solved collaboratively by two pre-trained LLMs, referred to here as the \textit{primary} and \textit{auxiliary} LLMs, denoted by $\psi_{\text{p}}$ and $\psi_{\text{aux}}$, respectively. 
The answers to these QA problems collectively form the LTL formula. Specifically, at each iteration $k$, a QA problem is formulated. The \textit{question} incorporates information about the NL task $\xi$, the set of robot skills $\ccalA$, and the responses from previous iterations (i.e., the partially constructed LTL formula). This textual information, denoted by $\ell(k)$, is provided as a prompt to the primary LLM, which is tasked with generating an \textit{answer} $s(k)$ that is either a temporal/logical operator or an AP. If the primary LLM is not sufficiently confident about the correctness of its response, according to the user-defined threshold $1-\alpha$ introduced in Problem \ref{prob:main}, help is requested first from the auxiliary LLM and, if necessary, from a user. This coordination among the LLMs and the user is enabled via CP (applied as shown in Alg. \ref{algo2:algorithm} and \ref{algo3:algorithm}). This iterative process continues until an ending operator, denoted by `$/$', is generated at some step $H$, indicating that the LTL formula $\phi = s(1)s(2)\ldots s(H)$ has been fully constructed. 
In the following subsections, we describe the translation process in detail.}

\begin{figure}[t] 
\centering
\includegraphics[width=\linewidth]{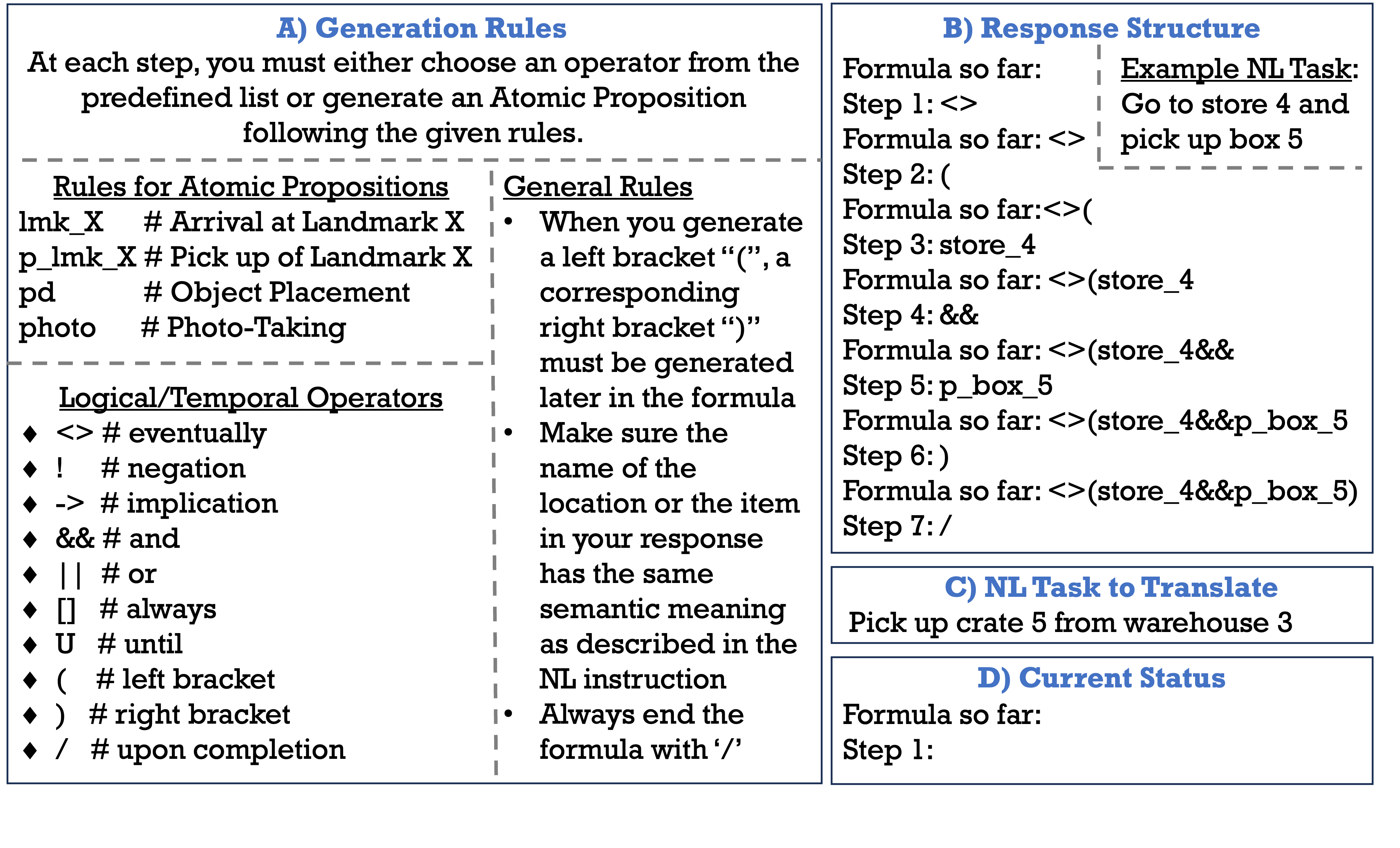}\vspace{-0.9cm}
\caption{\textcolor{black}{Example of the constructed prompt \textcolor{black}{for GPT-4o. } This prompt refers to $k=1$ {meaning that no response has been selected yet and the formula constructed so far is therefore `empty'.}
}} \vspace{-0.4cm}
\label{fig:prompt}
\end{figure}

\subsection{Incremental Translation: Querying the Primary LLM\label{sec:primaryLLM}}

In what follows, we describe the role of the primary LLM. Specifically, we discuss how the prompt $\ell(k)$ is constructed, followed by how it is used to solicit a response $s(k)$ from the primary model in an uncertainty-aware manner using CP.

\textbf{Prompt Structure:} The prompt $\ell(k)$ used for querying the primary LLM at every step $k$ has the following structure: (a) \textit{Translation rules} that defines: (i) the list of all valid temporal and logical operators; (ii) rules for generating APs that are grounded on the robot skill-set $\ccalA$ {and the landmarks mentioned in the task $\xi$}; and (iii) general rules that the LLM should follow during the translation process (e.g., about the ending operator '$/$'). Part (a) prompts the LLM to generate an operator or an AP using the rules (i)-(iii) towards constructing a formula that is semantically equivalent to $\xi$. (b) \textit{Response structure} with {$n$-shot examples describing the desired structure of the LLM output at each step $k$, where $n$ depends on the language model}; (c) \textit{Task description} of an NL task $\xi$ that needs to be converted into an LTL formula; (d) \text{Current status} describing the formula $\phi(k-1)$ that has been built up to step $k-1$ and the current step indicator $k$. An example of a prompt is shown in Fig. \ref{fig:prompt}.

\textbf{{Initial} Prompt Construction:} Given a scenario $\sigma=\{\xi,\mathcal{A}\}$, the initial prompt $\ell(1)$ is manually constructed  with the initial formula in part (d) being empty [line \ref{algo1:prompt_init}, Alg. \ref{alg:translation}].
Parts (a)-(c) in $\ell(k)$ are the same for all $k\geq 1$. {Part (d) is updated at every step $k$ once a response $s(k)$ is generated; this will be discussed later in the text.}
%



\textbf{Generating Confidence-aware Responses:}
Next, we describe how to obtain a response $s(k)$ corresponding to $\ell(k)$. A straightforward approach is to query the primary LLM once and record its response; {we denote this process by $s(k) = \texttt{Prompt}(\psi_{\text{p}}, \ell(k))$}. However, this single-query approach is uncertainty-agnostic, as it does not capture the model’s confidence in its predictions. To address this, inspired by \cite{cheng2024efficienteqa, su2024api}, we adopt a sampling-based approach: the model is queried multiple times using $\texttt{Prompt}(\psi_{\text{p}}, \ell(k))$, and the frequency of each unique response is used as a coarse-grained measure of uncertainty. In what follows, we describe this process in detail; 
see Alg. \ref{algo2:algorithm} and Ex \ref{ex2}.\footnote{Note that the output of Alg. \ref{algo2:algorithm} will be used later in Alg. \ref{algo3:algorithm} to generate prediction sets that are crucial for ConformalNL2LTL.}

\begin{algorithm}[t]
\footnotesize
\caption{$\texttt{GetResp}$}\label{algo2:algorithm}
\begin{algorithmic}[1]
\State \textbf{Input}: LLM $\psi$, prompt $\ell(k)$, $m$, semantic threshold $\zeta$
\State Initialize $\hat{\ccalS}^\psi(k)=\emptyset$
\For{i=$1$ to $m$}\label{algo2:forprompt}
\State $\hat{\ccalS}^\psi(k)=\hat{\ccalS}^\psi(k)\cup\{\texttt{Prompt($\psi$,$\ell(k)$)}\}$\label{algo2:prompt}
\EndFor
\State $\ccalS^\psi(k)=\texttt{Prune}(\hat{\ccalS}^\psi(k))$\label{algo2:prune}
\For{$s(k)\in\ccalS^\psi(k)$}\label{algo2:freqLoop}
\State Calculate $F^\psi(s(k)|\ccalS(k))$ as in \eqref{eq:freqM}\label{algo2:freqCalc}
\EndFor
\For{$s^{(i)}(k),s^{(j)}(k)\in\ccalS^\psi(k)$}\label{algo2:ssstart}
\If{$\texttt{S.S}(s^{(i)}(k),s^{(j)}(k))\geq \textcolor{black}{\zeta}$}\label{algo2:SS}
\State $l=\argmin_{l\in\{i,j\}}F^\psi(s^{(l)}(k))$
\State $o=\{i,j\}\setminus\{l\}$
\State $s^{(l)}(k)\leftarrow s^{(o)}(k)$\label{algo2:ssend}
\EndIf
\EndFor
\State $\Bar{\ccalS}^\psi(k)=\texttt{Unique}(\ccalS^\psi(k))$\label{algo2:unique}
\For{$\Bar{s}(k)\in\Bar{\ccalS}(k)$}
\State Calculate $F^\psi(\Bar{s}(k)|\ccalS(k))$ as in \eqref{eq:freqM}\label{algo:reComputeFreq}
\EndFor
\State \textbf{Output:} $[\Bar{\ccalS}^\psi(k),F^\psi(k)]$
\end{algorithmic}
\end{algorithm} 

\textit{Querying the model and rule-based pruning:}
We first query the model $\psi_{\text{p}}$ $m$ times with the same prompt $\ell(k)$, and construct the set $\hat{\mathcal{S}}^{\psi_{\text{p}}}(k)=\{\hat s^{(i)}(k)\}_{i=1}^m$ [line \ref{algo2:prompt}, Alg. \ref{algo2:algorithm}], 
where $\hat s^{(i)}(k)$ is the $i$-th response of the LLM, i.e., $\hat s^{(i)}(k)=\texttt{Prompt}(\psi_{\text{p}},\ell(k))$.
To ensure adherence to the constraints specified in part (a) of the prompt $\ell(k)$ (see Fig. \ref{fig:prompt}), out of all the $m$ responses, we filter out any responses that: (i) include an invalid operator that is not part of the predefined set; (ii) APs that violate the rules set up in (a); and (iii) contains multiple operators or APs [line \ref{algo2:prune}, Alg. \ref{algo2:algorithm}]. This yields a set ${\mathcal{S}^{\psi_{\text{p}}}}(k)\subseteq\hat{\mathcal{S}}^{\psi_{\text{p}}}(k)$, such that ${\mathcal{S}}^{\psi_\text{p}}(k)=\{ s^{(i)}(k)\}_{i=1}^{m_k}$, {where $m_k$ denotes the number of valid responses}. 
This process is denoted by  $\ccalS^{\psi_\text{p}}(k)=\texttt{Prune}(\hat{\ccalS}^{\psi_\text{p}}(k))$.

\textit{Frequency as a measure of uncertainty:}
We compute the frequency of each distinct response $s\in{\mathcal{S}}^{\psi_{\text{p}}}(k)$ as follows [lines \ref{algo2:freqLoop}-\ref{algo2:freqCalc}, Alg. \ref{algo2:algorithm}]: \begin{align}\label{eq:freqM}
    F(s|\ell(k),m_k,\mathcal{S}^{\psi_{\text{p}}}(k))=\frac{1}{m_k}\sum_{s^{(i)}(k)\in{\mathcal{S}}^{\psi_{\text{p}}}(k)}\mathds{1}(s^{(i)}(k)=s),
\end{align}
\noindent where $F(s \mid \ell(k), m_k, \mathcal{S}^{\psi_{\text{p}}}(k)) \in (0,1]$ represents the fraction of valid responses equal to $s$, i.e., the number of times $s$ was generated divided by the total number of valid responses $m_k>0$.
Hereafter, for simplicity, when it is clear from the context, we abbreviate $F(s|\ell(k), m_k,\mathcal{S}^{\psi_{\text{p}}}(k))$ as $F^{\psi_{\text{p}}}(s|\ccalS(k))$. Observe that every response in $\ccalS^{\psi_{\text{p}}}(k)$ has a positive frequency. For any other response $s$ that is not among the $m_k$ generated responses (and thus does not belong to 
$\ccalS^{\psi_{\text{p}}}(k)$),  
we set $F^{\psi_{\text{p}}}(s|\ccalS(k))=0$. {Note that $F^{\psi_{\text{p}}}(s|\ccalS(k))=0$ for any response when all $m$ responses are invalid, i.e. $\ccalS^{\psi_{\text{p}}}(k)=\emptyset$ (or, equivalently, $m_k=0$).} We use the frequency $F^{\psi_{\text{p}}}(s|\ccalS(k))$ as the proxy for the  confidence of the model $\psi_{\text{p}}$ on each response $s\in\mathcal{S}^{\psi_{\text{p}}}(k)$ 
since higher response frequency correlates with greater model confidence \cite{wang2022self,su2024api}. 
This enables our method to work seamlessly with  LLMs that do not allow users to have access to logits modeling confidence (e.g., GPT 4) \cite{cheng2024efficienteqa}. {We emphasize that more fine-grained measures of uncertainty can also be employed, as explored in \cite{su2024api,kaur2024addressing}.}

\textit{Merging Semantically Similar Responses: }It is possible for two textually different responses, $s^{(i)}(k)$ and $s^{(j)}(k)$, to convey the same semantic meaning. This issue arises particularly in responses involving APs, which as discussed in Section \ref{sec:problem}, capture the application of a robot action on a landmark, where multiple synonymous terms (e.g., `vehicle' and `car') may be used for the same landmark. To address this, inspired by \cite{cheng2024efficienteqa}, we refine the response frequencies by grouping semantically similar responses and treating them as a single entity; this essentially results in summing their frequencies. 

Formally, consider any two responses $s^{(i)}(k),s^{(j)}(k)\in{\mathcal{S}^{{\psi_\text{p}}}}(k)$ {that are both APs}. 
If their semantic similarity, computed by a function $\texttt{S.S}$ [line \ref{algo2:SS}, Alg. \ref{algo2:algorithm}], exceeds a predefined threshold $\zeta$, we consider the responses semantically equivalent. {Semantic similarity can be computed, e.g., using cosine similarity; our implementation of $\texttt{S.S}$ is discussed in Section~\ref{sec:sims}.}
%
{When two APs are deemed semantically equivalent,} then we retain the more frequent response-i.e., if $F^{{\psi_\text{p}}}(s^{(i)}(k)|\ccalS(k))\geq F^{{\psi_\text{p}}}(s^{(j)}(k)|\ccalS(k))$, we replace $s^{(j)}(k)$  with $s^{(i)}(k)$ in the set $\mathcal{S}^{{\psi_\text{p}}}(k)$. This process is repeated for all distinct pairs $s^{(i)}(k),s^{(j)}(k)\in{\mathcal{S}^{{\psi_\text{p}}}}(k)$, {that are both APs,} to ensure that every remaining response in the set is semantically distinct [lines \ref{algo2:ssstart}-\ref{algo2:ssend}, Alg. \ref{algo2:algorithm}]. 

{We define $\Bar{\mathcal{S}}^{{\psi_\text{p}}}(k) \subseteq \mathcal{S}^{{\psi_\text{p}}}(k)$ as the set of unique responses in $\mathcal{S}^{{\psi_\text{p}}}(k)$ using the $\texttt{Unique}$ function [line \ref{algo2:unique}, Alg. \ref{algo2:algorithm}].}
For each response ${\bar{s}} \in \Bar{\mathcal{S}}^{{\psi_\text{p}}}(k)$, we {re-compute the scores $F(\Bar{s}|\ell(k),m_k,\ccalS^{\psi_p}(k))$, using \eqref{eq:freqM}, to account for the above-mentioned replacements of responses in $\ccalS^{\psi_p}(k)$ due to semantic similarities} [line \ref{algo:reComputeFreq}, Alg. \ref{algo2:algorithm}]. {Hereafter, for brevity, we denote the above response extraction process with frequency scores as $[\Bar{\mathcal{S}}^{{\psi_\text{p}}},F^{\psi_\text{p}}]=\texttt{GetResp}(\psi_\text{p},\ell(k),m,\zeta)$; see Ex \ref{ex2}.}


\textbf{Selecting a Response:} Using these scores modeling the confidence of  ${\psi_\text{p}}$, a straightforward approach to selecting the response $s(k)$, for a given prompt $\ell(k)$ is choosing the response $\bar{s} \in \Bar{\mathcal{S}}^{{\psi_\text{p}}}(k)$ with the highest confidence score, i.e., 
\begin{equation}
    s(k)= \arg \max_{\bar s\in\Bar{\mathcal{S}}^{{\psi_\text{p}}}(k)} F(\bar s|\ell(k), m_k,\mathcal{S}^{{\psi_\text{p}}}(k)).
\end{equation}
However, these scores do not represent the calibrated confidence \cite{ren2023robots}. Thus, a much-preferred approach would be to generate a set of responses (called, hereafter, \textit{prediction set}), denoted by $\ccalC(\ell(k),{\psi_\text{p}})$ that contains the ground truth response with a user-specified confidence (that depends on the threshold $1-\alpha$ introduced in Problem \ref{prob:main}) [line \ref{algo1:pred_set}, Alg. \ref{alg:translation}]. In what follows, we assume the prediction sets are provided. We defer their construction to Section \ref{sec:cp}, but we emphasize that they are critical to achieving the desired $1-\alpha$ translation accuracy.

Given $\ccalC(\ell(k),{\psi_\text{p}})$, we select response $s(k)$ as follows. If $\ccalC(\ell(k),{\psi_\text{p}})$ is a singleton (i.e., $|\ccalC(\ell(k),{\psi_\text{p}})|=1$), then we select the response i.e., a temporal/logical operator or an AP, included in $\ccalC(\ell(k),{\psi_\text{p}})$ as it contributes to the translation process with high confidence [line \ref{algo1:pickAction}, Alg. \ref{alg:translation}]. If however, the prediction set comes out empty due to no valid responses from $\psi_{\text{p}}$, the translation process is terminated [line \ref{algo1:haltprimary}, Alg, \ref{alg:translation}]. {Otherwise, if $|\ccalC(\ell(k),{\psi_\text{p}})|>1$ the primary model seeks helps from the auxiliary model $\psi_{\text{aux}}$ to select $s(k)$. If the primary and the auxiliary model cannot jointly provide a response $s(k)$, then help from a user is requested who may either select $s(k)$ from a set of responses or may terminate the translation process in case that set does not contain the correct response. This help module, is presented in the Section \ref{sec:auxLLM}.}


{\textbf{Updating the Formula and the Prompt:}} Once $s(k)$ is selected {by the primary model or through the help module}, the formula is updated as $\phi(k) = \phi(k-1) + s(k)$, 
where with slight abuse of notation, the `+' denotes concatenation of texts [line \ref{algo1:formula_update}, Alg. \ref{alg:translation}]. In this iterative update, {$\phi(0)$ is an `empty' formula used in the construction of the initial prompt $\ell(1)$.}
We then update the iteration index [line \ref{algo1:Updatek}, Alg. \ref{alg:translation}], for which the prompt, $\ell(k+1)$, is constructed by replacing part (d) of $\ell(k)$ with $\phi(k)$ [line \ref{algo1:constrPrompt}, Alg. \ref{alg:translation}]; see also Fig. \ref{fig:prompt}. This procedure is repeated sequentially until, at some time step $k=H$, the ending operator `$/$' is generated, indicating that the formula has been fully constructed as $\phi = s(1) \dots s(H)$. In Section \ref{sec:theory}, we show that this algorithm achieves user-defined translation success rates $1-\alpha$ addressing Problem \ref{prob:main}.

\begin{ex}[Extracting Responses and Frequency Scores]\label{ex2}
Consider the NL task described in Example~\ref{ex1}. Suppose that at the beginning of iteration $k=3$ of Alg.~\ref{alg:translation}, the partially constructed LTL formula is $\phi(2)=\lozenge($, which is used to construct the prompt $\ell(3)$. We illustrate how the function $\texttt{GetResp}(\psi,\ell(3),m,\zeta)$ computes a set of semantically distinct responses along with their frequencies.
Let $m=5$ and $\zeta=0.75$. Querying the LLM $\psi$ using $\texttt{Prompt}(\psi,\ell(3))$ for $m$ times [line~\ref{algo2:prompt}, Alg.~\ref{algo2:algorithm}] produces the following set of responses $\hat{\ccalS}^\psi(3)=$\{p\_red\_box, p\_red\_box, p\_red\_package, p\_green\_bottle, pre\%blocks\} after using the $\texttt{Prompt}(\psi,\ell(3))$. The last entry 'pre\%blocks' is then \textit{pruned} away using the \texttt{Prune} function, since it does not follow the rules requiring the action of pick up to be of the form p\_lmk 
[line \ref{algo2:prune}, Alg. \ref{algo2:algorithm}]. This gives us the pruned set $\ccalS^\psi(3)=$\{p\_red\_box, p\_red\_box, p\_red\_package, p\_green\_bottle\}.
The frequencies of the unique responses are then computed  as $F^{\psi}(\text{p\_red\_box}|
\ccalS(3))=0.5$, $F^{\psi}(\text{p\_red\_package}|
\ccalS(3))=0.25$, and $F^{\psi}(\text{p\_green\_bottle}|
\ccalS(3))=0.25$ [line~\ref{algo2:freqCalc}, Alg.~\ref{algo2:algorithm}]. Next, pairwise semantic similarities are evaluated using $\texttt{S.S}$ [line~\ref{algo2:SS}, Alg.~\ref{algo2:algorithm}]. In particular, consider the case where $\texttt{S.S}(\text{p\_red\_box},\text{p\_red\_package})=0.8>\zeta$, while similarities involving $\text{p\_green\_bottle}$ are below the threshold $\zeta$. Since $\text{p\_red\_box}$ appears more frequently than $\text{p\_red\_package}$, $\text{p\_red\_package}$ is replaced with $\text{p\_red\_box}$, yielding the following set of responses $\ccalS^\psi(3)=\{\text{p\_red\_box}, \text{p\_red\_box}, \text{p\_red\_box}, \text{p\_green\_bottle}\}$.
Applying the \texttt{Unique} function produces the set $\bar{\ccalS}^\psi(3)=\{\text{p\_red\_box},\text{p\_green\_bottle}\}$ [line~\ref{algo2:unique}, Alg.~\ref{algo2:algorithm}]. The updated frequencies are then $F^{\psi}(\text{p\_red\_box}|
\ccalS(3))=0.75$ and $F^{\psi}(\text{p\_green\_bottle}|
\ccalS(3))=0.25$ [line~\ref{algo:reComputeFreq}, Alg.~\ref{algo2:algorithm}]. The set $\bar{\ccalS}^\psi(3)$ together with these frequencies constitutes the output of $\texttt{GetResp}(\psi,\ell(3),m,\zeta)$.
\end{ex}

\begin{figure}[t] 
\centering
\includegraphics[width=\linewidth]{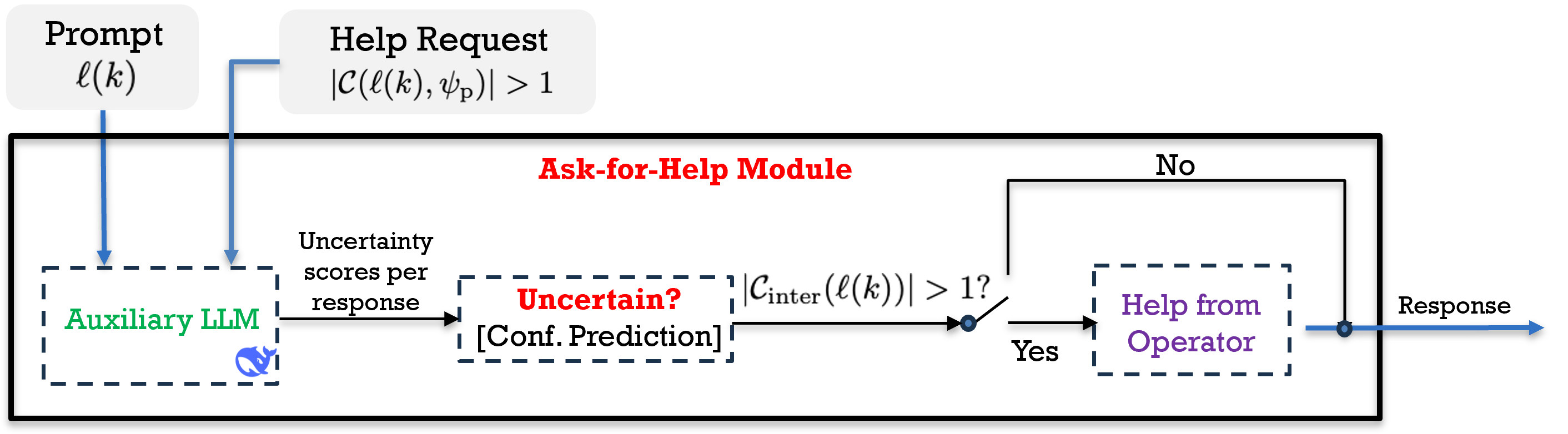}\vspace{-0.5cm}
\caption{Overview of the help module triggered due to 
a non-singleton prediction set $\ccalC(\ell(k),\psi_{\text{p}})$. The auxiliary LLM is then queried to produce its own prediction set $\ccalC(\ell(k),\psi_{\text{aux}})$. If the intersection of the two sets, denoted by $\ccalC_{\text{inter}}(\ell(k))$ (see \eqref{eq:intersectionSets}), is non-singleton—signaling high uncertainty—a human operator is asked to select the correct option from the elements of $\ccalC_{\text{inter}}(\ell(k))$.}\vspace{-0.2cm}
\label{fig:helpModule}
\end{figure}

\subsection{Asking for Help: Querying the Auxiliary LLM and a User \label{sec:auxLLM}}
In what follows, we present the help module that is invoked to handle high-uncertainty cases determined by the size of $\ccalC(\ell(k),{\psi_\text{p}})$; see Figure \ref{fig:helpModule}.
As discussed earlier, when $|\ccalC(\ell(k),\psi_{\text{p}})|>1$, our algorithm invokes the auxiliary LLM $\psi_{\text{aux}}$ to address the $k$-th QA problem using a prompt of the same structure and the same content in part (d) as $\ell(k)$, and the same process applied to the primary model. For simplicity of notation we use $\ell(k)$ to denote the prompt used to query either model, {since they have the same structure and are constructed from the same input $\ccalA$, $\xi$, and $\phi(k-1)$.}
This produces a prediction set $\ccalC(\ell(k),\psi_{\text{aux}})$ that contains the ground truth with user-specified confidence [line \ref{algo:auxPred}, Alg. \ref{alg:translation}]; {we defer its construction to Section \ref{sec:cp}.}

Next, we compute the intersection of the prediction sets generated by the primary and the auxiliary model, i.e.,
\begin{equation}\label{eq:intersectionSets}
    {\ccalC}_{\text{inter}}(\ell(k))=\ccalC(\ell(k),{\psi_\text{p}})\cap\ccalC(\ell(k),\psi_{\text{aux}}).
\end{equation}
Thus, $\ccalC_{\text{inter}}(\ell(k))$ contains only the responses that are shared by both models [line \ref{algo:intersection}, Alg. \ref{alg:translation}]. If $|{\ccalC}_{\text{inter}}(\ell(k))|=1$, then $s(k)\in{\ccalC}_{\text{inter}}(\ell(k))$ is selected as the response [line \ref{algo:finalPick}, Alg. \ref{alg:translation}]. If $|{\ccalC}_{\text{inter}}(\ell(k))|>1$, the algorithm asks for help from a human operator to select a response $s(k)$ from the prediction set ${\ccalC}_{\text{inter}}(\ell(k))$ given the prompt $\ell(k)$ [line \ref{algo1:HelpHuman2}, Alg. \ref{alg:translation}]; see also Remark \ref{rem:multiFeasFinal}. If the prediction set ${\ccalC}_{\text{inter}}(\ell(k))$ does not contain any correct responses (e.g., if both $\psi_{\text{p}}$ and $\psi_{\text{aux}}$ generate the same incorrect response), or if ${\ccalC}_{\text{inter}}(\ell(k))=\emptyset$ (none of the $m_k$ generated responses are valid, i.e., $\ccalS^{\psi_{\text{aux}}}(k)=\emptyset$, {or $\ccalC(\ell(k),\psi_{\text{p}})$ and $\ccalC(\ell(k),\psi_{\text{aux}})$ are mutually exclusive}), the user can halt the translation process, marking it as a `failure' of our translation algorithm [line \ref{algo1:halt}, Alg. \ref{alg:translation}]; see Theorem \ref{thm1}.

\vspace{-0.3cm}
\subsection{Constructing Prediction Sets using Conformal Prediction}\label{sec:cp}

In this section, we discuss how the prediction sets $\mathcal{C}(\ell(k),{\psi_\text{p}})$ and ${\ccalC}(\ell(k),\psi_{\text{aux}})$, introduced in Section \ref{sec:primaryLLM} and Section \ref{sec:auxLLM} respectively, are constructed using CP, based on a user-defined translation success rate $1 - \alpha$ (see Problem \ref{prob:main}). The application of CP to {a single} LLM for single-step QA tasks in open vocabulary settings has been explored in \cite{su2024api,kaur2024addressing}. 
We extend this approach to address \textit{multi-step interdependent QA tasks in open-vocabulary settings}, and the usage of \textit{more than one LLM}, both of which are essential for ConformalNL2LTL. {Background on CP is provided in Appendix \ref{appendix:CP}.}

We would like to note that we cannot apply the approach in \cite{su2024api} to compute prediction sets
$\ccalC(\ell(k),{\psi_\text{p}})$, and ${\ccalC}(\ell(k),\psi_{\text{aux}})$ at each iteration $k$, as this violates the i.i.d. assumption required to apply CP. The reason is that the prompt $\ell(k)$ depends on the previous prompt $\ell(k-1)$, for all $k\in\{2,\ldots,H\}$, by construction. Inspired by \cite{ren2023robots,lindemann2023safe,wang2024safe}, we address this challenge by lifting the data to sequences. Specifically, $\ccalD$ will be transformed into an equivalent distribution $\ccalD'$ over data sequences and, more precisely, sequences of prompts $\ell(k)$. Then, we will apply CP by constructing (i) a calibration dataset and (ii) designing a non-conformity score (NCS) modeling the joint translation error of the primary and the auxiliary models. Throughout this section, for simplicity of presentation, we assume that there exists a unique ground truth formula $\phi$ corresponding to each scenario drawn from $\ccalD$; relaxing this assumption is discussed in Remark \ref{rem:multiFeasFinal} {and more formally in Appendix \ref{app:proof}}.
Next, we discuss our approach. We start with defining the distribution $\ccalD'$ used to lift translation scenarios $\sigma\sim\ccalD$ to data sequences. 

\textbf{Distribution $\ccalD'$:} Consider a translation scenario $\sigma\sim\ccalD$ that is transformed into a sequence of prompts $\ell(k)$, i.e.,
\begin{align}\label{equ:seqprompt}
    \bar\ell=\ell(1),\ldots,\ell(k),\ldots,\ell(H).
\end{align}
Each prompt $\ell(k)$ in the sequence $\bar\ell$  has the structure defined in \ref{sec:primaryLLM}, {where part (d) is constructed using the ground truth responses of all steps up to $k-1$.} 
The true response associated with $\ell(k)$ is selected from the following set containing the shared responses of both models:
\begin{equation}\label{eq:barS}
    \Bar{\ccalS}(k)=\Bar{\ccalS}^{\psi_\text{p}}(k)\cap\Bar{\ccalS}^{\psi_{\text{aux}}}(k),
\end{equation}
where $[\Bar{\ccalS}^{\psi_p}(k),F^{\psi_{p}}(k)]=\texttt{GetResp}({\psi_\text{p}},\ell(k),m,\zeta)$ and $[\Bar{\ccalS}^{\psi_{\text{aux}}}(k),F^{\psi_{\text{aux}}}(k)]=\texttt{GetResp}(\psi_{\text{aux}},\ell(k),m,\zeta)$. {As mentioned in Section \ref{sec:auxLLM}, the prompts $\ell(k)$ used for the primary and auxiliary LLMs have the same structure and the same content in part (d), differing only in the specific content of parts (a)-(c).} 
If the true response is not contained in $\Bar{\ccalS}(k)$ or if $\Bar{\ccalS}(k)=\emptyset$, then we type in the correct response. Thus, $\ccalD$ is converted into an equivalent distribution $\ccalD'$ over
sequences of prompts, where each sequence of prompts is constructed by selecting the true responses at each decision step $k$ as
discussed above. Note that the set $\Bar{\ccalS}(k)$ depends on the parameters $m$ and $\zeta$; consequently, $\ccalD'$ is conditioned on the values of these parameters.

\textbf{Constructing a Calibration Dataset:} As discussed earlier, applying CP requires a calibration dataset, which is constructed as follows. First, we sample $D>0$ i.i.d. scenarios $\{\sigma_{i,\text{calib}}\}_{i=1}^D$ from $\mathcal{D}$ and construct the following sequence of $H_i$ prompts for each $i \in \{1, \ldots, D\}$:
\begin{align}\label{equ:prompt}
    \bar\ell_{i,\text{calib}}=\ell_{i,\text{calib}}(1),\ldots,\ell_{i,\text{calib}}(H_i),
\end{align}
where each prompt $\ell_{i,\text{calib}}(k)$ in $\bar\ell_{i,\text{calib}}$  has the structure defined in \ref{sec:primaryLLM}, {and contains the ground truth responses of all previous steps in part (d)}. The corresponding true LTL formula is: 
\begin{align} \phi_{i,\text{calib}}=s_{i,\text{calib}}(1),\ldots,s_{i,\text{calib}}(H_i),\label{eq:calibLTL}
\end{align} 
where $s_{i,\text{calib}}(k)$ refers to the correct response to $\ell_{i,\text{calib}}(k)$. This correct formula is constructed by computing $\Bar{\ccalS}(k)$ defined in \eqref{eq:barS}.
%
Then, we select the correct response from $\Bar{\ccalS}(k)$. If the correct response in not in $\Bar{\ccalS}(k)$, or if $\Bar{\ccalS}(k)=\emptyset$, we type in the correct response; the frequency of that response is $0$. This process gives rise to a calibration set $\mathcal{M}_{\text{cal}}=\left\{\left(\bar\ell_{i,\text{calib}},\phi_{i,\text{calib}}\right)\right\}_{i=1}^D$ {where $\bar\ell_{i,\text{calib}}\sim\ccalD'$ by construction.}

\textbf{Non-Conformity Score (NCS):} Next, we define a non-conformity score (NCS) modeling the worst-case 
translation error of the primary and the auxiliary models across all steps during the translation process. The NCS will be computed for all calibration sequences. 
Specifically, we define the NCS of the $i$-th calibration sequence as 
\begin{equation}\label{eq:NCS}
    \bar r_i = 1-\bar F(\phi_{i,\text{calib}}|\bar\ell_{i,\text{calib}})
\end{equation}
where 
\begin{equation}\label{eq:barF}
    \bar F(\phi_{i,\text{calib}}|\bar\ell_{i, \text{calib}})=\min_{k\in\{1,\ldots,H_i\},\psi\in\{\psi_p,\psi_{\text{aux}}\}} F^\psi(s_{i,\text{calib}}(k)|\ccalS(k)),
\end{equation}
where $\phi_{i,\text{calib}}$ is as defined in \eqref{eq:calibLTL}, and $F^\psi(s_{i,\text{calib}}(k)|\ccalS(k))$ is the frequency of the correct response $s_{i,\text{calib}}(k)$ computed 
{based on $[\Bar{S}^\psi, F^\psi]=\texttt{GetResp}(\psi,\ell_{i,\text{calib}}(k),m,\zeta)$} for model $\psi\in\{\psi_{\text{p}}, \psi_{\text{aux}}\}$. In words, \eqref{eq:barF} computes the lowest frequency assigned to the correct response across all translation steps $k$ and both models for the $i$-th calibration sequence.
%

\textbf{Constructing Prediction Sets:} Consider an unseen test/validation scenario $\sigma_{\text{test}}\sim\ccalD$ \textcolor{black}{which is used to generate a} sequence of prompts:
\begin{align}\label{equ:prompt}
\bar\ell_{\text{test}}=\ell_{\text{test}}(1),\ldots,\ell_{\text{test}}(H_{\text{test}}).
\end{align}
%
If for all $k>1$, $\ell_{\text{test}}(k)$ is constructed using the true response for $\ell_{\text{test}}(k-1)$—either chosen from $\Bar{\ccalS}(k-1)$ (defined in \eqref{eq:barS}) or supplied by the user when $\Bar{\ccalS}(k-1)=\emptyset$ or does not contain the correct response—then the calibration and validation sequences are i.i.d. according to the distribution $\ccalD'$.  
Then, CP generates a prediction set $\bar{\ccalC}(\bar{\ell}_{\text{test}})$ of LTL formulas, containing the correct one, denoted by $\phi_{\text{test}}$,  with probability greater than $1-\alpha$, i.e.,
\begin{equation}\label{eq:CP1}
P_{\bar{\ell}_{\text{test}}\sim\ccalD'}(\phi_{\text{test}}\in \bar{\mathcal{C}}(\bar{\ell}_{\text{test}})) \geq 1-\alpha.
\end{equation}
\textcolor{black}{This prediction set is defined as follows:
\begin{equation}\label{eq:pred3}
\bar{\mathcal{C}}(\bar{\ell}_{\text{test}}) = \{\phi~|~ \bar{F}(\phi~|~ \bar{\ell}_{\text{test}}) \geq 1-\bar{q} \},
\end{equation} where $\bar{q}$ is the $\frac{\lceil(D+1)(1-\alpha)\rceil}{D}$ empirical quantile of $\{\bar{r}_i\}_{i=1}^D$.
}\footnote{{If the quantile value $\bar q=1$, then the prediction set will contain all possible combinations of LLM token outputs, which is a very large finite set. In practice, this may happen if there exist multiple calibration sequences with NCS $\bar{r}_i=1$ (i.e., at least one of the responses during the construction of the ground truth formula was provided by a user) and a sufficiently low $\alpha$.}}.

%
%

\begin{algorithm}[t]\label{algo3:algorithm}
\footnotesize
\caption{$\texttt{PredSet}$}\label{algo3:algorithm}
\begin{algorithmic}[1]
\State \textbf{Input:} $\ell(k)$, LLM $\psi$, quantile $\Bar{q}$, semantic threshold $\zeta$
\State Initialize $\ccalC(\ell(k),\psi)=\emptyset$
\State [$\Bar{\ccalS}^\psi(k),F^\psi(k)$]=$\texttt{GetResp}(\psi,\ell(k),m,\zeta)$ (see Alg. \ref{algo2:algorithm})\label{algo3:getresp}
\For{$s(k)\in\Bar{\ccalS}^\psi(k)$}\label{algo3:for}
\If{$F^\psi(s(k))\geq1-\Bar{q}$}\label{algo3:checkValid}
\State $\ccalC(\ell(k),\psi)=\ccalC(\ell(k),\psi)\cup\{s(k)\}$\label{algo3:addtoSet}
\EndIf
\EndFor
\State\textbf{Output: }$\ccalC(\ell(k),\psi)$
\end{algorithmic}
\end{algorithm} 

\textbf{On-the-fly Construction of Prediction Sets: } Notice that $\bar{\mathcal{C}}(\bar{\ell}_{\text{test}})$ in \eqref{eq:pred3} is constructed after the entire sequence $\bar{\ell}_{\text{test}}=\ell_{\text{test}}(1),\dots,\ell_{\text{test}}(H_{\text{test}})$ is obtained.
However, at test time, we do not see the entire sequence of prompts all at once; instead, the contexts ${\ell}_{\text{test}}(k)$ are revealed sequentially over the steps $k$. 

Thus, we construct the prediction set $\bar{\mathcal{C}}(\bar{\ell}_{\text{test}})$ on-the-fly and incrementally, using only the current and past information; see Alg. \ref{algo3:algorithm}. 
At every step $k\in\{1,\dots,H_{\text{test}}\}$, we construct the following local prediction set.


\begin{align}\label{eq:predSetStep}
\ccalC(\ell_{\text{test}}(k))=\begin{cases}
    \ccalC(\ell_{\text{test}}(k),{\psi_\text{p}}), &\text{if }|\ccalC(\ell_{\textcolor{black}{\text{test}}}(k),{\psi_\text{p}})|=1 \\
    \ccalC_{\text{inter}}(\ell_{\textcolor{black}{\text{test}}}(k)),&\text{otherwise.}
\end{cases}
\end{align}
\textcolor{black}{In \eqref{eq:predSetStep}, $\ccalC_{\text{inter}}(\ell_{\textcolor{black}{\text{test}}}(k))$ is defined as in  \eqref{eq:intersectionSets}, i.e., $\ccalC_{\text{inter}}(\ell_{\textcolor{black}{\text{test}}}(k))=\ccalC(\ell_{\text{test}}(k),{\psi_\text{p}})\cap\ccalC(\ell_{\text{test}}(k),\psi_{\text{aux}})$, where $\mathcal{C}({\ell}_{\text{test}}(k),{\psi_\text{p}})$ and $ \ccalC({\ell}_{\text{test}}(k),\psi_{\text{aux}})$ are prediction sets produced by the primary and the auxiliary models, respectively.}

Specifically, the set $\mathcal{C}({\ell}_{\text{test}}(k),{\psi_\text{p}})$ is defined as follows
\begin{align}\label{eq:pred3local}
    \nonumber\mathcal{C}&({\ell}_{\text{test}}(k),{\psi_\text{p}})\\&=\{s_{\text{test}}(k)\in\Bar{\mathcal{S}}^{\psi_p}_{\text{test}}(k)~|~F^{\psi_p}(s_{\text{test}}(k)|\ccalS(k))\ge1-\bar{q}\},
\end{align}
{where $\Bar{\mathcal{S}}^{\psi_p}_{\text{test}}(k)$ is computed by $[\Bar{\mathcal{S}}^{\psi_p}_{\text{test}}(k),F^{\psi_p}(k)]=\texttt{GetResp}({\psi_\text{p}},\ell_{\text{test}}(k),m,\zeta)$ [lines \ref{algo3:getresp}-\ref{algo3:addtoSet}, Alg. \ref{algo3:algorithm}].}

Similarly, the prediction set $\ccalC({\ell}_{\text{test}}(k),\psi_{\text{aux}})$ associated with the auxiliary model $\psi_{\text{aux}}$, is defined as 
\begin{align}\label{eq:auxPredSet}
    \nonumber\ccalC&({\ell}_{\text{test}}(k),\psi_{\text{aux}})\\&=\{s_{\text{test}}(k)\in\Bar{\mathcal{S}}^{\psi_{\text{aux}}}_{\text{test}}(k)~|~F^{\psi_{\text{aux}}}(s_{\text{test}}(k)|\ccalS(k))\geq 1-\bar{q}\},
\end{align}
{where $\Bar{\mathcal{S}}^{\psi_{\text{aux}}}_{\text{test}}(k)$ is computed by $[\Bar{\mathcal{S}}^{\psi_{\text{aux}}}_{\text{test}}(k),F^{\psi_{\text{aux}}}_{\text{test}}(k)]=\texttt{GetResp}(\psi_{\text{aux}},\ell_{\text{test}}(k),m,\zeta)$ [line \ref{algo3:getresp}-\ref{algo3:addtoSet}, Alg. \ref{algo3:algorithm}].}
%
%
The prediction sets $ {\mathcal{C}}({\ell}_{\text{test}}(k),{\psi_\text{p}})$ and $ {\mathcal{C}}({\ell}_{\text{test}}(k),{\psi_\text{aux}})$ are used in Section \ref{sec:primaryLLM} to select $s(k)$ [line \ref{algo1:pred_set}-\ref{algo1:pickAction}, Alg. \ref{alg:translation}].
As it will be shown in Sec. \ref{sec:theory}, it holds that 
\begin{align}\label{eq:CP_local}
    P_{\bar{\ell}_{\text{test}}\sim\ccalD'}(\phi_{\text{test}}\in\hat{\ccalC}(\Bar{\ell}_{\text{test}}))\geq 1-\alpha,
\end{align}
where
\begin{equation}\label{eq:causalPred3}
    \hat{\ccalC}(\bar{\ell}_{\text{test}})={\mathcal{C}}({\ell}_{\text{test}}(1))\times\dots\times{\mathcal{C}}({\ell}_{\text{test}}(k))\dots\times{\mathcal{C}} ({\ell}_{\text{test}}(H_{\text{test}})).
\end{equation} 


\begin{rem}[Construction of Prediction Sets]
    We emphasize that construction of the sets $\ccalC(\ell_{{\text{test}}}(k))$ in \eqref{eq:predSetStep} does not require knowledge of the true responses during validation time. However, if all prompts $\ell_{{\text{test}}}(k)$ have been constructed using the (unknown) true response at every step $k$, then \eqref{eq:CP_local} holds (see Section \ref{sec:theory}). This result will be used in Section \ref{sec:theory} to show that ConformalNL2LTL achieves user-specified $1-\alpha$ translation success rates {(see Theorem \ref{thm1} and its proof)}.
\end{rem}


\vspace{-0.2cm}
\section{Translation Success Rate Guarantees}\label{sec:theory}
\vspace{-0.1cm}

In this section, we show that Alg. \ref{alg:translation} achieves user-specified $1-\alpha$ translation success rates as required in Problem \ref{prob:main}. To show this, we need to first state the following result.

\begin{prop}\label{prop:new}
    Consider a sequence of prompts $\Bar{\ell}_{\text{test}}:=\ell_{\text{test}}(1),\ldots,\ell_{\text{test}}(H_{\text{test}})\sim\ccalD'$, i.e., $\ell_{\text{test}}(k)$ contains the true response $s_{\text{test}}(k-1)$, for all $k\in\{1,\ldots,H_{\text{test}}\}$. It holds that $P_{\bar{\ell}_{\text{test}}\sim\ccalD'}(\phi_{\text{test}}\in\hat{\ccalC}(\Bar{\ell}_{\text{test}}))\geq1-\alpha$, where $\hat{\ccalC}(\Bar{\ell}_{\text{test}})$ is as defined in \eqref{eq:causalPred3}.
\end{prop}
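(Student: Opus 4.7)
The plan is to decompose the argument into two pieces: a standard split-CP coverage statement for the \emph{global} prediction set $\bar{\mathcal{C}}(\bar{\ell}_{\text{test}})$ defined in \eqref{eq:pred3}, and a deterministic set-inclusion $\bar{\mathcal{C}}(\bar{\ell}_{\text{test}})\subseteq\hat{\mathcal{C}}(\bar{\ell}_{\text{test}})$. Chaining the two yields the desired bound through the inclusion $\{\phi_{\text{test}}\in\bar{\mathcal{C}}(\bar{\ell}_{\text{test}})\}\subseteq\{\phi_{\text{test}}\in\hat{\mathcal{C}}(\bar{\ell}_{\text{test}})\}$.

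For the first piece, I would invoke standard split-CP. Under the proposition's hypothesis that $\bar\ell_{\text{test}}\sim\ccalD'$, i.e., every prompt in the test sequence is built with the correct response in part (d), the test sequence is exchangeable with the calibration sequences $\{\bar\ell_{i,\text{calib}}\}_{i=1}^D$, which by construction are i.i.d.\ samples from $\ccalD'$. Hence the non-conformity score $\bar r_{\text{test}}=1-\bar F(\phi_{\text{test}}\mid\bar\ell_{\text{test}})$ is exchangeable with $\{\bar r_i\}_{i=1}^D$, and the standard quantile-lemma argument (Appendix \ref{appendix:CP}) yields
\begin{equation*}
P_{\bar{\ell}_{\text{test}}\sim\ccalD'}\bigl(\bar F(\phi_{\text{test}}\mid\bar\ell_{\text{test}})\geq 1-\bar q\bigr)\geq 1-\alpha,
\end{equation*}
which is precisely \eqref{eq:CP1}.

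For the second piece, I would prove $\bar{\mathcal{C}}(\bar{\ell}_{\text{test}})\subseteq\hat{\mathcal{C}}(\bar{\ell}_{\text{test}})$ directly from the definitions. Take any $\phi=s(1)\cdots s(H_{\text{test}})\in\bar{\mathcal{C}}(\bar{\ell}_{\text{test}})$, so by \eqref{eq:pred3}, $\bar F(\phi\mid\bar\ell_{\text{test}})\geq 1-\bar q$. Unfolding the minimum in \eqref{eq:barF}, this is equivalent to $F^{\psi}(s(k)\mid\ccalS(k))\geq 1-\bar q$ for every step $k\in\{1,\ldots,H_{\text{test}}\}$ and for \emph{both} models $\psi\in\{\psi_{\text{p}},\psi_{\text{aux}}\}$. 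By the definitions \eqref{eq:pred3local} and \eqref{eq:auxPredSet}, this means $s(k)\in\mathcal{C}(\ell_{\text{test}}(k),\psi_{\text{p}})\cap\mathcal{C}(\ell_{\text{test}}(k),\psi_{\text{aux}})$ for every $k$. Then I would split on the two branches of \eqref{eq:predSetStep}: if $|\mathcal{C}(\ell_{\text{test}}(k),\psi_{\text{p}})|=1$ then $\mathcal{C}(\ell_{\text{test}}(k))=\mathcal{C}(\ell_{\text{test}}(k),\psi_{\text{p}})\ni s(k)$; otherwise $\mathcal{C}(\ell_{\text{test}}(k))=\ccalC_{\text{inter}}(\ell_{\text{test}}(k))\ni s(k)$. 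Either way $s(k)\in\mathcal{C}(\ell_{\text{test}}(k))$ for all $k$, hence $\phi\in\hat{\mathcal{C}}(\bar{\ell}_{\text{test}})$ by \eqref{eq:causalPred3}.

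The main obstacle is conceptual rather than computational: it lies in recognizing that the NCS in \eqref{eq:barF} has been deliberately symmetrized across the two models by taking a minimum, which is what makes the set-inclusion argument work even when the help module is \emph{not} triggered (i.e., when the local set is $\mathcal{C}(\ell_{\text{test}}(k),\psi_{\text{p}})$ alone). A secondary subtlety is the i.i.d.\ hypothesis: exchangeability between calibration and test sequences would fail if the test prompt $\ell_{\text{test}}(k)$ were built with a response selected by the translation algorithm rather than the true response, so the proposition explicitly restricts attention to $\bar\ell_{\text{test}}\sim\ccalD'$, which is exactly what enables the quantile argument in step one. Combining the two steps gives $P_{\bar\ell_{\text{test}}\sim\ccalD'}(\phi_{\text{test}}\in\hat{\mathcal{C}}(\bar\ell_{\text{test}}))\geq P_{\bar\ell_{\text{test}}\sim\ccalD'}(\phi_{\text{test}}\in\bar{\mathcal{C}}(\bar\ell_{\text{test}}))\geq 1-\alpha$.
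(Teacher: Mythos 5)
Your proposal is correct and follows essentially the same route as the paper's proof: both reduce the claim to the deterministic inclusion $\bar{\mathcal{C}}(\bar{\ell}_{\text{test}})\subseteq\hat{\mathcal{C}}(\bar{\ell}_{\text{test}})$ combined with the global coverage guarantee \eqref{eq:CP1}, and both establish the inclusion by unfolding the minimum in \eqref{eq:barF} to obtain per-step membership in both models' prediction sets and then observing that each local set $\mathcal{C}(\ell_{\text{test}}(k))$ contains the intersection under either branch of \eqref{eq:predSetStep}. The only cosmetic difference is that the paper names the intermediate Cartesian product $\mathcal{C}_{\text{inter}}(\bar{\ell}_{\text{test}})$ and proves an equality with $\bar{\mathcal{C}}(\bar{\ell}_{\text{test}})$, whereas you chain the inclusions directly; your explicit exchangeability justification of \eqref{eq:CP1} is a welcome addition the paper leaves implicit.
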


\begin{proof}
    To show that $P(\phi_{\text{test}}\in\hat{\ccalC}(\Bar{\ell}_{\text{test}}))\geq1-\alpha$, 
    it suffices to show that,
    \begin{align}\label{eq:subseteqprop}
        \Bar{\ccalC}(\Bar{\ell}_{\text{test}})\subseteq\hat{\ccalC}(\Bar{\ell}_{\text{test}})
    \end{align}due to \eqref{eq:CP1}. To show \eqref{eq:subseteqprop}, we first introduce the following set of LTL formulas:
    \begin{align}\label{eq:interformulaprop}
        \ccalC_{\text{inter}}(\Bar{\ell}_{\text{test}}):=\ccalC_{\text{inter}}(\ell_{\text{test}}(1))\times\ldots\times\ccalC_{\text{inter}}(\ell_{\text{test}}(H_{\text{test}})),
    \end{align}where  $\ccalC_{\text{inter}}(\ell_{\text{test}}(k))=\ccalC(\ell_{\text{test}}(k),\psi_{\text{p}})\cap\ccalC(\ell_{\text{test}}(k),\psi_{\text{aux}})$ as defined in \eqref{eq:intersectionSets}. In what follows, we will show that (A) $\Bar{\ccalC}(\Bar{\ell}_{\text{test}})={\ccalC}_{\text{inter}}(\Bar{\ell}_{\text{test}})$ and (B) $\hat{\ccalC}(\Bar{\ell}_{\text{test}})\supseteq\ccalC_{\text{inter}}(\Bar{\ell}_{\text{test}})$. Combining (A) and (B) yields \eqref{eq:subseteqprop}, completing the proof.
    
    \textit{Proof of Part (A): }To prove (A), it suffices to show that for any LTL formula $\phi$, if $\phi\in\Bar{\ccalC}(\Bar{\ell}_{\text{test}})\iff\phi\in\ccalC_{\text{inter}}(\Bar{\ell}_{\text{test}})$, where $\phi=s(1)\ldots s(k)\ldots s(H_{\text{test}}).$

For any $\phi\in\Bar{\ccalC}(\Bar{\ell}_{\text{test}})$, due to \eqref{eq:pred3},
    \begin{align}
        \nonumber&\Bar{F}(\phi|\Bar{\ell}_{\text{test}})\geq1-\Bar{q}\\
        \nonumber\iff&\min_{k\in\{1,\ldots,H_{\text{test}}\},\psi\in\{\psi_{\text{p}},\psi_{\text{aux}}\}}F^{\psi}(s(k))\geq1-\Bar{q},\\
        \nonumber\iff&F^{\psi}(s(k))\geq1-\Bar{q},\forall k\in\{1,\ldots,H_{\text{test}}\}, \forall\psi\in\{\psi_{\text{p}},\psi_{\text{aux}}\},\\
        \nonumber\iff&\{F^{\psi_{\text{p}}}(s(k))\geq1-\Bar{q}\} \texttt{ and }\{F^{\psi_{\text{aux}}}(s(k))\geq1-\Bar{q}\},\\
        \nonumber&\forall k\in\{1,\ldots,H_{\text{test}}\},\\
        \nonumber\iff&\{s(k)\in\ccalC(\ell_{\text{test}}(k),\psi_{\text{p}})\}\nonumber\texttt{ and }\{s(k)\in\ccalC(\ell_{\text{test}}(k),\psi_{\text{aux}})\},\\\nonumber&\forall k\in\{1,\ldots,H_{\text{test}}\},\\
        \nonumber\iff&s(k)\in\ccalC(\ell_{\text{test}}(k),\psi_{\text{p}})\cap\ccalC(\ell_{\text{test}}(k),\psi_{\text{aux}}),\\&\forall k\in\{1,\ldots,H_{\text{test}}\}.\label{eq:propstep}
    \end{align}Due to \eqref{eq:propstep}, we get that $\phi\in\ccalC_{\text{inter}}(\Bar{\ell}_{\text{test}}).$ These implications also hold in the reverse direction: for any $\phi$, if $\phi\in\ccalC_{\text{inter}}(\Bar{\ell}_{\text{test}})$, then $\phi\in\Bar{\ccalC}(\Bar{\ell}_{\text{test}})$, completing the proof of Part (A).

\textit{Proof of Part (B): }This result holds by construction of $\ccalC_{\text{inter}}(\Bar{\ell}_{\text{test}})$ in \eqref{eq:interformulaprop} and $\hat{\ccalC}(\Bar{\ell}_{\text{test}})$ in \eqref{eq:causalPred3}. Specifically, observe that from \eqref{eq:intersectionSets} and \eqref{eq:predSetStep},
    \begin{align}\label{eq:supsetstepprop}
        \ccalC(\ell_{\text{test}}(k))\supseteq\ccalC_{\text{inter}}(\ell_{\text{test}}(k)),\forall k\in\{1,\ldots,H_{\text{test}}\}.
    \end{align}Thus, since the set $\hat{\ccalC}(\Bar{\ell}_{\text{test}})$ is defined in \eqref{eq:causalPred3} as the cartesian product of all set $\ccalC(\ell_{\text{test}}(k))$ and, similarly, $\ccalC_{\text{inter}}(\Bar{\ell}_{\text{test}})$ is defined as the cartesian product of all sets $\ccalC_{\text{inter}}(\ell_{\text{test}}(k))$ in \eqref{eq:interformulaprop}, we get $\hat{\ccalC}(\Bar{\ell}_{\text{test}})\supseteq\ccalC_{\text{inter}}(\Bar{\ell}_{\text{test}})$ as a direct consequence of \eqref{eq:supsetstepprop}, completing the proof of Part (B).
\end{proof}

\vspace{-0.4cm}
\textcolor{black}{\begin{theorem}[Translation Success Rates]\label{thm1}
    \textcolor{black}{Consider translation scenarios $\sigma_{\text{test}}\sim\ccalD$ and any choice of the number of queries $m>1$ and semantic similarity threshold $\zeta>0$. Assume that $m$ and $\zeta$ remain the same during calibration and validation time.} (a) If, for each scenario $\sigma_{\text{test}}\sim\ccalD$ (where $\ccalD$ is unknown), the prediction sets $\ccalC(\ell_{\text{test}}(k))$, $\forall k\in\{1,\ldots,H_{\text{test}}\}$ are constructed \textit{on-the-fly} with a coverage level of $1-\alpha$, 
    and the LLM asks for help from a \textcolor{black}{benign user} (who chooses the ground truth response when it is found in $\ccalC(\ell_{\text{test}}(k))$) when $|\ccalC(\ell_{\text{test}}(k))|>1$, \textcolor{black}{then ${P}_{\sigma\sim\ccalD}(\phi\equiv\xi_{\text{test}})\geq1-\alpha$, where $\phi$ is constructed as described in Section \ref{sec:primaryLLM}}. (b) If $\Bar{F}(\phi_{\text{test}}|\Bar{\ell}_{\text{test}})$, used in \eqref{eq:pred3}, models true conditional probabilities, then the help rate is minimized among possible prediction schemes that achieve $1-\alpha$ translation success rates. 
    \footnote{{The \textit{help rate} is defined as the percentage of steps $k$, where $|\ccalC(\ell_{\text{test}}(k))|>1$, during the construction of formulas corresponding to test scenarios $\sigma_{\text{test}}$.}}.
\end{theorem}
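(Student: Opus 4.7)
The plan is to reduce both parts to Proposition~\ref{prop:new} via a coupling argument between the actual execution of Alg.~\ref{alg:translation} and the counterfactual prompt sequence $\bar{\ell}_{\text{test}}\sim\ccalD'$ in which the ground-truth response is used at every step. At a high level, a benign user will always pick the true response whenever it lies in the local prediction set, so the sequence of prompts actually produced by Alg.~\ref{alg:translation} coincides with the counterfactual $\bar{\ell}_{\text{test}}$ on the event $\{\phi_{\text{test}}\in\hat{\ccalC}(\bar{\ell}_{\text{test}})\}$; this event already carries probability at least $1-\alpha$ by Proposition~\ref{prop:new}. Assuming the calibration-time and validation-time choices of $m,\zeta$ coincide, this directly inherits the CP coverage bound.

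\textbf{Part (a).} I would prove by induction on $k\in\{1,\ldots,H_{\text{test}}\}$ that, conditional on $\phi_{\text{test}}\in\hat{\ccalC}(\bar{\ell}_{\text{test}})$, the prompt $\ell(k)$ generated by Alg.~\ref{alg:translation} equals $\ell_{\text{test}}(k)$ and the selected response $s(k)$ equals the true response $s_{\text{test}}(k)$. The base case is immediate, as $\ell(1)$ is built from the empty formula and matches $\ell_{\text{test}}(1)$. For the inductive step, the containment $\phi_{\text{test}}\in\hat{\ccalC}(\bar{\ell}_{\text{test}})$ implies via \eqref{eq:causalPred3} that $s_{\text{test}}(k)\in\ccalC(\ell_{\text{test}}(k))=\ccalC(\ell(k))$; then either $|\ccalC(\ell(k))|=1$, so the singleton must equal $s_{\text{test}}(k)$, or $|\ccalC(\ell(k))|>1$, in which case the benign user picks $s_{\text{test}}(k)$ from the set. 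Either way $s(k)=s_{\text{test}}(k)$, and the prompt-update rule of Alg.~\ref{alg:translation} yields $\ell(k+1)=\ell_{\text{test}}(k+1)$. At termination, $\phi=\phi_{\text{test}}$, which by uniqueness of the ground-truth formula is semantically equivalent to $\xi_{\text{test}}$. Hence $\{\phi_{\text{test}}\in\hat{\ccalC}(\bar{\ell}_{\text{test}})\}\subseteq\{\phi\equiv\xi_{\text{test}}\}$, and Proposition~\ref{prop:new} gives $P_{\sigma\sim\ccalD}(\phi\equiv\xi_{\text{test}})\geq P(\phi_{\text{test}}\in\hat{\ccalC}(\bar{\ell}_{\text{test}}))\geq 1-\alpha$.

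\textbf{Part (b).} Here I would invoke the standard optimality result for conformal prediction with oracle scoring: when the non-conformity score is derived from the true conditional probability, i.e., $\bar{F}(\phi\mid\bar{\ell})=P(\phi\mid\bar{\ell})$, the threshold-at-$1-\bar{q}$ prediction rule is, among all set-valued predictors attaining coverage $\geq 1-\alpha$, the one of smallest expected size (a Neyman--Pearson-type exchange argument; cf.\ Sadinle, Lei, and Wasserman, 2019). Combined with Part~(a), any alternative prediction scheme $\tilde{\ccalC}$ attaining translation success rate $\geq 1-\alpha$ must cover the true formula with probability $\geq 1-\alpha$, and the exchange argument yields $\mathbb{E}|\hat{\ccalC}(\bar{\ell}_{\text{test}})|\leq\mathbb{E}|\tilde{\ccalC}(\bar{\ell}_{\text{test}})|$. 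Because the help rate is exactly the expected fraction of steps $k$ at which $|\ccalC(\ell_{\text{test}}(k))|>1$ and the joint set factorizes over steps in \eqref{eq:causalPred3}, minimality of the expected joint size translates directly into minimality of the per-step non-singleton probability, hence of the help rate.

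\textbf{Main obstacle.} The hard part will be Part~(b): the standard CP optimality theorem is stated jointly over the entire sequence via $\bar{F}$, whereas the algorithm's help rate is defined stepwise through the local sets $\ccalC(\ell_{\text{test}}(k))$, which are built from two \emph{separate} LLM scores $F^{\psi_{\text{p}}}$ and $F^{\psi_{\text{aux}}}$ and combined by intersection in \eqref{eq:predSetStep}. Bridging this gap requires showing that, under the true-conditional-probability hypothesis on $\bar{F}$, the joint threshold rule of \eqref{eq:pred3} and the stepwise intersection rule of \eqref{eq:predSetStep} yield the same family of sets (or at least that the latter is no larger in expectation than any competitor achieving $1-\alpha$). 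This identification, essentially a consequence of Part~(A) in the proof of Proposition~\ref{prop:new}, together with a careful handling of the conditioning on past correct responses, is where the bulk of the technical care will be required.
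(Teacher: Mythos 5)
Your proposal is correct and follows essentially the same route as the paper: part (a) reduces to Proposition~\ref{prop:new} by showing that on the event $\{\phi_{\text{test}}\in\hat{\ccalC}(\bar{\ell}_{\text{test}})\}$ the algorithm (singleton selection plus benign-user choice) reproduces the ground-truth formula, and part (b) is delegated to Theorem~1 of Sadinle et al.\ (2019), exactly as the paper does. The only difference is presentational: you establish the identification between the actual run and the counterfactual prompt sequence by an explicit induction on $k$, whereas the paper does it via a case analysis over the events $C_1,C_2,C_3$ and a conditioning argument on the variables $B_r$; your induction is, if anything, a slightly more careful rendering of the same step.
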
}

\begin{proof}
(a) To show this result, we first need to define three Boolean variables $C_1$, $C_2$, and $C_3$
modeling three mutually exclusive and exhaustive events. 

\textit{Variable $C_1$:} The Boolean variable $C_1$ is true if at any time step $k$, all prediction sets $\mathcal{C}({\ell}_{\text{test}}(k))$ are singleton and contain the true response $s_{\text{test}}(k)$, for all $k\in\{1,\dots,H_{\text{test}}\}$, i.e., 
\begin{align}
    C_1=\bigwedge_{k=1}^{H_{\text{test}}}\Big[\mathcal{C}\big({\ell}_{\text{test}}(k)\big)=\{s_{\text{test}}(k)\}\Big]\label{eq:case1}
\end{align}
%
\textit{Variable $C_2$:} The Boolean variable $C_2$ is true when all prediction sets contain the true response as in $C_1$ but at least one prediction set is non-singleton, i.e.,
\begin{align}
    C_2=\nonumber&\bigg\{\bigwedge_{k=1}^{H_{\text{test}}}\Big[s_{\text{test}}(k)\in\mathcal{C}\big({\ell}_{\text{test}}(k)\big)\Big]\bigg\}\\&\wedge \bigg\{\bigvee_{k=1}^{H_{\text{test}}}\Big[|\mathcal{C}\big({\ell}_{\text{test}}(k)\big)|>1\Big]\bigg\}\label{eq:case2}
\end{align}
\textit{Variable $C_3$:} The Boolean variable $C_3$ is true if at least one of the prediction sets $\mathcal{C}({\ell}_{\text{test}}(k))$ does not contain the true response, regardless of its cardinality, i.e.

\begin{align}
    C_3=\bigvee_{k=1}^{H_{\text{test}}}\Big[s_{\text{test}}(k)\notin\mathcal{C}\big({\ell}_{\text{test}}(k)\big)\Big]\label{eq:case3}
\end{align}
Observe that $C_3$ is also true if for some $k\in\{1,\ldots,H_{\text{test}}\}$, we have $\ccalC(\ell_{\text{test}}(k))=\emptyset$, since in that case it trivially holds $s_{\text{test}}\not\in\ccalC(\ell_{\text{test}}(k))$.

Observe that to prove (a), it suffices to show that $P(C_{12})\geq 1-\alpha$, where $C_{12}=C_1 \vee C_2$. This is because if either of these variables is true, then all prediction sets $\mathcal{C}({\ell}_{\text{test}}(k))$ contain the true response $s_{\text{test}}(k)$. Specifically, if all prediction sets are singleton (event $C_1$), then, by construction, ConformalNL2LTL selects $s_{\text{test}}(k)$ at every step $k$.
If at least one prediction set is non-singleton (event $C_2$), the user will still select the correct response, under the assumption of benign users. In contrast, if $C_3$ is true, translation failure is unavoidable: if prediction sets are non-singleton, the user halts the process; if they are singleton, the wrong response is selected. Thus it suffices to bound the probability of $C_{12}$.

In what follows, we show that $P(C_{12})\geq 1-\alpha$. Specifically, we will show that $P(C_{12})=P_{\Bar{\ell}_{\text{test}}\sim\ccalD'}(\phi_{\text{test}}\in\hat{\ccalC}(\Bar{\ell}_{\text{test}}))$ which is greater than $1-\alpha$ due to Proposition \ref{prop:new}.  Notice that $P_{\Bar{\ell}_{\text{test}}\sim\ccalD'}(\phi_{\text{test}}\in\hat{\ccalC}(\Bar{\ell}_{\text{test}}))$ is the probability that the ground truth response $s_{\text{test}}(k)$ belongs to $\mathcal{C}({\ell}_{\text{test}}(k))$ for all $k$ (see \eqref{eq:causalPred3}), assuming that all prompts $\ell_{\text{test}}(k)$ include the ground truth responses of all previous steps till $k-1$. We can therefore expand $P_{\Bar{\ell}_{\text{test}}\sim\ccalD'}(\phi_{\text{test}}\in\hat{\ccalC}(\Bar{\ell}_{\text{test}}))$ and rewrite it as follows:
\begin{align}\label{eq:propInput}
        \nonumber P_{\Bar{\ell}_{\text{test}}\sim\ccalD'}&\big(\phi_{\text{test}}\in\hat{\ccalC}(\Bar{\ell}_{\text{test}})\big)\\=&P\bigg(\bigwedge_{k=1}^{H_{\text{test}}}\Big(s_{\text{test}}(k)\in\ccalC\big(\ell_{\text{test}}(k)\big)|\bigwedge_{r=1}^{k-1}B_r\Big)\bigg)\geq1-\alpha.
\end{align}
where $B_r$ is Boolean variable that is true if $\ell_{\text{test}}(r)$ is constructed with the ground truth responses {and the inequality holds due to Proposition \ref{prop:new}.}

Now observe that $C_{12}$ can be written as 
\begin{align}
    C_{12}=\bigg\{\bigwedge_{k=1}^{H_{\text{test}}}\Big[s_{\text{test}}(k)\in\mathcal{C}\big({\ell}_{\text{test}}(k)\big)\Big]\bigg\}\label{eq:case12}
\end{align}
As discussed above, if $C_{12}$ is true, it means that every time step $k$, ConformalNL2LTL selects the correct response. This equivalently means that every prompt $\ell_{\text{test}}(k)$ has been constructed using the correct responses. {And since, \eqref{eq:case12} is a conjunction over all $k\in\{1,\ldots,H_{\text{test}}\}$, it can be written equivalently as}
    \begin{align}
    C_{12}=\Bigg\{\bigwedge_{k=1}^{H_{\text{test}}}\Big[s_{\text{test}}(k)\in\mathcal{C}\big({\ell}_{\text{test}}(k)\big)|\bigwedge_{r=1}^{k-1}B_r\Big]\Bigg\}.\label{eq:case12_b}
\end{align}
%
{This equivalence holds because, at every step $k$, $B_{r}$ will always be true $\forall r\in\{1,\ldots,k-1\}$ {as $s_{\text{test}}(r)\in\mathcal{C}({\ell}_{\text{test}}(r))$, $\forall r\in\{1,\ldots,k-1\}$.}}
Comparing \eqref{eq:case12_b} and \eqref{eq:propInput}, we get that $P(C_{12})\geq 1-\alpha$ as required. We emphasize that the guarantee in Proposition~\ref{prop:new}, and, consequently, the guarantee $P(C_{12})\geq 1-\alpha$,  holds for any fixed value of $m$ and $\zeta$. These parameters must remain unchanged during both calibration and validation; modifying them induces a distribution shift, since the induced distribution $\ccalD'$ is conditioned on them. {However, the exact value of $m$ and $\zeta$ does not affect correctness of these guarantees as this only changes the granularity of of the NCS \cite{angelopoulos2021gentle}. Specifically, coverage guarantees provided by CP depend on the accuracy of ranking the responses from lowest to highest model confidence, and not on the actual values of the NCS. Even if the scores fail to approximate this ranking, the marginal coverage still holds, albeit with potentially excessively large prediction sets \cite{angelopoulos2021gentle}.}


(b) This result holds directly due to Th. 1 in \cite{sadinle2019least}.\end{proof}

\begin{rem}[Multiple Semantically Equivalent Formulas]\label{rem:multiFeasFinal}
    The above results assume that $\ccalD$ generates task scenarios with a unique corresponding LTL formula. To relax this assumption, the following changes are made during calibration and test time. (i) During calibration, among all LTL formulas that are semantically equivalent to each other and to a given NL task, we select the one constructed by picking the {correct response} $s_{i,\text{calib}}(k)\in\Bar{\ccalS}(k)$ with the highest frequency $F^{\psi_{\text{p}}}(s_{i,\text{calib}}(k)|\ccalS(k))$ {(among all other correct responses in $\Bar{\ccalS}(k)$)} at each step $k$ as computed by $\psi_\text{p}$. If none of the responses generated is the ground truth, or valid, we type out the correct response with the frequency of the ground truth being $F^{\psi_{\text{p}}}(s_{i,\text{calib}}(k)|\ccalS(k))=0$ in such cases. 
    %
    %
    (ii) Given a validation NL task $\xi_{\text{test}}$, the LTL formula $\phi_{\text{test}}$ in {both Proposition \ref{prop:new} and} Theorem \ref{thm1} refers to the formula that is semantically equivalent to $\xi_{\text{test}}$ and has been constructed exactly as in (i). {The proofs of both Proposition \ref{prop:new} and Theorem \ref{thm1} remain the same.} (iii) When user help is required at test time, the user selects the correct response among the ones in the prediction set with the highest frequency according to $\psi_p$. {Extension of the CP analysis provided in Section \ref{sec:cp} to cases with multiple semantically equivalent formulas is presented in Appendix \ref{app:proof}.}
   
\end{rem}

\section{Experimental Validation}\label{sec:sims}
In this section, we demonstrate the effectiveness of ConformalNL2LTL. Section \ref{sec:exp_nltask} describes the experimental setup and key implementation details. Section \ref{sec:alphaVal} empirically validates the translation success rate guarantees of Thm. \ref{thm1} and demonstrates the algorithm's help rate. The benefits of using an auxiliary LLM as well as the effect of the number $m$ of queries are shown in Section \ref{sec:auxModelHelp}. {Comparisons against existing NL-to-LTL translators are provided in Section \ref{sec:compareTranslate}.}
Section \ref{sec:comparePlanner} compares the performance of ConformalNL2LTL, incorporated with an existing planner \cite{luo2021abstraction}, with NL-based planners \cite{chen2023scalable}. Comparisons show that our method outperforms existing baselines as it can achieve desired translation performance while minimally asking for help from users.
Section \ref{sec:ood} evaluates robustness of our method in out-of-distribution (OOD) NL tasks. While probabilistic guarantees of Thm. \ref{thm1} may not hold in OOD settings,  our framework achieves satisfactory success rates with very little user help. Demonstrations on NL-encoded navigation and mobile manipulation tasks using ground robots are included in
\cite{video_demo}, where ConformalNL2LTL is integrated with an existing LTL planner \cite{luo2021abstraction}; see Fig. \ref{fig:env}. 
All case studies use GPT-4o as the primary model and Deepseek v3 as the auxiliary model. 
%

\subsection{Setting Up Translation Scenarios for Robot Tasks}\label{sec:exp_nltask}
\textbf{Robot \& Environment:} Our experiments focus on robot delivery and transportation tasks. The robot's action space is defined as $\mathcal{A}=$\{move to, pick up, put down, take a picture\} across all translation scenarios.
The environments vary in terms of objects (e.g., crates, boxes, cars, bottles, and traffic cones) as well as regions of interest (e.g., kitchens, warehouses, buildings, and parking lots) across our translation scenarios. Since multiple objects of the same semantic label (e.g., `box') may exist in the environment, we assign unique identifiers (e.g., `box 1', `box 2') to distinguish them. 

\textbf{Rules for APs:} As discussed in Section \ref{sec:primaryLLM}, part (a) of each prompt informs the LLM about how APs should be formulated indicating when a robot action is applied to one (if any) of the available landmarks. In our setup, we require the LLM to generate APs in the following format (see  Fig. \ref{fig:prompt}): (i) `lmk\_X' which is satisfied if the robot {is near} a landmark with a semantic label `lmk' and unique identifier $X$; (ii) `p\_lmk\_X' which is satisfied if the robot picks up the landmark with label `lmk' and unique identifier $X$; (iii) `pd' that is satisfied if the robot puts down a carried object (i.e., opens its gripper); and (iv) `photo' that is satisfied if the robot takes a photo. For instance, the AP `p\_box\_1' is satisfied if the robot picks up box 1. {We note that the identifier $X$ is omitted in (i)–(ii) when the landmark in the NL task is not associated with a unique identifier.} Also, observe that (i)–(ii) correspond to specific landmarks, whereas (iii)–(iv) do not. We emphasize that although we provide the LLM with the general structure of APs, we do not supply a predefined list of APs; instead, our algorithm infers them directly from the NL task.

{\textbf{Granularity of LLM's confidence:}}
As mentioned in Sec.~\ref{sec:primaryLLM}, we query the model $m$ times, which determines the granularity of the LLM’s confidence estimates for a given response (see the frequency score in \eqref{eq:freqM}). Smaller values of $m$ yield a less accurate ranking of responses by confidence, which can lead to higher help rates; however, larger values of $m$ increase the number of API calls and the associated costs. Thus, careful consideration is required when selecting $m$. {We emphasize again that the value of $m$ does not affect our translation success rate guarantees as stated in Thm. \ref{thm1}. This is because CP depends on the accuracy of ranking of the responses from lowest to highest model confidence, and not on the actual values of the NCS. Even if the scores fail to approximate this ranking, the marginal coverage still holds, albeit with excessively large prediction sets. In all our experiments, we select $m=10$ unless otherwise specified.}

\textbf{Eliminating Invalid APs:} 
\textcolor{black}{As soon as the LLM generates a response $s^{(j)}(k)$ at a step $k$ of Alg. \ref{alg:translation} (see Section \ref{sec:primaryLLM}), we first examine if that response is a logical/temporal operator. If not, we examine its candidacy as an AP, i.e., if it satisfies the rules mentioned above. APs that do not satisfy these rules are filtered out as discussed in Sec. \ref{sec:primaryLLM} (e.g., `pick\_box\_1' would be eliminated). Note that APs that do not have a numerical identifier are not filtered out (e.g., `p\_box').}

\textbf{Semantic Similarity:} Let $s^{i}(k),s^{j}(k)$ be two valid APs that were generated as responses at step $k$. Before checking their semantic similarity (see \eqref{eq:semantic_sim}), we first check if their numerical identifiers match. If not (e.g., `box\_1' and `package\_2'), we set their semantic similarity to zero as these APs refer to two different landmarks. 
Otherwise, we proceed with the computation of the cosine similarity of their textual parts (e.g., `box' and `package') as
\begin{align}\label{eq:semantic_sim}
    \cos(s^{(i)}(k),s^{(j)}(k))=\frac{v_i\cdot v_j}{\lVert v_i\rVert\lVert v_j\rVert},
\end{align} {where $v_i$ and $v_j$ are the vector representations of $s^{i}(k)$ and $s^{j}(k)$, respectively, computed using pre-trained LLM embeddings. The two APs are then merged if the cosine similarity exceeds a specified threshold $\zeta=0.75$. A higher threshold requires greater semantic similarity for merging to occur, influencing the granularity of the LLM's confidence indirectly. For instance, if the threshold is set too high, multiple semantically similar responses may remain unmerged and their frequencies will not be aggregated to reflect the LLM's overall confidence in those responses. {Conversely, thresholds that are too low may result in incorrectly treating semantically distinct responses as similar, which can also lead to a poor estimation of the model’s confidence.}
However, as discussed earlier, the granularity of the LLM confidence {estimates} does not affect correctness of Thm. \ref{thm1} but it may affect the help rates.}



\textbf{Evaluation Dataset \& Distribution $\ccalD$:} To evaluate our method, we generated a dataset of $1,000$ translation scenarios paired with their corresponding LTL formulas constructed as discussed in Remark \ref{rem:multiFeasFinal}. We classify the considered scenarios into three categories: \textit{easy}, \textit{medium}, and \textit{hard} depending on the number of APs needed to construct the ground truth formula. \textit{Easy} scenarios include 365 NL tasks that would require at most $2$ APs 
in the LTL formula. An example of easy tasks is, `Never pick up crate 8' \textcolor{black}{which can be expressed in LTL as: \textit{$\phi=\Box\neg$crate\_8}}.
\textit{Medium} scenarios include 440 NL tasks that would  require $3$ or $4$ APs. 
For instance, consider the NL task `Pick up bottle 3 from kitchen 2 but never enter living room 1' that can be expressed using the following LTL formula: \textit{$\phi=\Diamond$(kitchen\_2 $\wedge$ p\_bottle\_3)$\wedge \square \neg$living\_room\_1}. \textit{Hard} scenarios include 195 NL tasks that include more than 4 APs in the LTL formula. 
An example of such an NL task is `Avoid going to region 5 until you reach street 4 while delivering pass 4 from house 4 to office 1', which can be expressed using the LTL formula \textit{$\neg\text{region\_5}\ccalU\text{street\_4}\wedge\lozenge(\text{house\_4}\wedge\lozenge(\text{p\_pass\_4}\wedge\lozenge(\text{office\_1}\wedge\text{pd})))$}.\footnote{\textcolor{black}{The LTL formula corresponding to this NL task can also be defined as $\bar{\phi}=\neg \text{region\_5}\ccalU (\text{street\_4}\wedge\lozenge(\text{house\_4}\wedge\lozenge(\text{p\_pass\_4}\wedge\lozenge(\text{office\_1}\wedge \text{pd}))))$ depending on the interpretation of the 'while' clause in the NL task. Note that $\bar{\phi}$ is not semantically equivalent to $\phi$ and therefore does not fall under the case discussed in Remark \ref{rem:multiFeasFinal}. Such ambiguities can naturally arise in NL tasks. During the construction of our dataset, we selected $\phi$ as the ground truth formula, as it aligns with our intended task interpretation; consequently, $\bar{\phi}$ is treated as an incorrect option. {These subjective interpretations of ambiguous NL instructions are implicitly captured by the underlying distribution $\ccalD$. A different distribution $\bar{\ccalD}$ could instead assign $\bar{\phi}$ as the ground truth. If the user who provides the calibration sequences differs from the user at deployment time, such differences in how ambiguous instructions are interpreted can lead to distribution shifts that may compromise translation performance. We emphasize that our translation success guarantees hold with respect to the chosen distribution $\ccalD$.}}}
%
%
Similar to related works that employ CP \cite{ren2023robots,lindemann2023safe,wang2024safe}, we construct a \textit{distribution $\ccalD$} (see Section \ref{sec:problem}) that samples uniformly the scenarios from this dataset. {However, more complex distributions can be considered; we emphasize that CP and, consequently, our translation algorithm is agnostic to $\ccalD$.} 

\textbf{Evaluation Metrics:}
(i) Success rate: the percentage of translation scenarios that are successfully handled.
(ii) User help rate: the percentage of responses $s(k)$ that were generated using human help across all validation scenarios.
(iii) Help frequency across scenarios: the percentage of translation scenarios in which user help was required at least once during formula construction, denoted by $H_f$.

\begin{table}[t]
    \centering
    \caption{Empirical Translation Success Rates and Help Rates} \vspace{-0.3cm}
    \label{table:cp_comp}
    \begin{tabular}{ccccc}
        \toprule
        Framework & \makecell{$1-\alpha$\\ (\%)} & \makecell{Empirical \\ Success Rate (\%)} & \makecell{Help Rate \\ (\%)} & \makecell{$H_f$ \\ (\%)}\\
        \midrule
        & 95 & 94.44 & 0 & 0 \\
        \makecell{ConformalNL2LTL \\ w/ auxiliary LLM} & 97 & 96.77 & 0.279 & 2.5 \\
        & 99 & 99.24 & 0.378 & 4 \\
        \midrule
        & 95 & 95.33 & 3.41 & 30 \\
        \makecell{ConformalNL2LTL \\ w/o auxiliary LLM} & 97 & 97 & 4.03 & 36 \\
        & 99 & 98.2 & 7.43 & 36.5 \\
        \bottomrule
    \end{tabular}
    \vspace{-0.6cm}
\end{table}

\subsection{Empirical Evaluation in In-Distribution Settings}\label{sec:alphaVal}
In this section, we empirically validate that ConformalNL2LTL achieves user-defined translation success rates, as guaranteed by Th. \ref{thm1}. To demonstrate this,  we repeat the following process for $1-\alpha\in\{95\%, 97\%, 99\%\}$. We draw $220$ scenarios from $\ccalD$, where $200$ of them are used for calibration and the rest of them are used as test scenarios. For each test scenario, Alg. \ref{alg:translation} returns an LTL formula and then we manually check if it is the correct one. We compute the ratio of how many of the generated formulas are the correct ones. We repeat this  $10$ times. The average ratio across all experiments is the (empirical) translation success rate.

The results are summarized in the first row of Table \ref{table:cp_comp}. Observe that the translation rates are $94.44\%$, $96.77\%$, and $99.24\%$ for $1-\alpha=95\%,97\%$ and $99\%$ respectively, validating Th. \ref{thm1}. Minor deviations, are expected due to the finiteness of the calibration and validation sets and the number of trials; see also \cite{angelopoulos2021gentle}. The user help rates were recorded as $0\%$, $0.279\%$ and $0.378\%$ for $1-\alpha=95\%,97\%$ and $99\%$ respectively. As expected, higher values for $1-\alpha$ increase the size of the prediction sets and, consequently, the likelihood that ConformalNL2LTL requests user assistance.
The help frequency $H_f$, i.e., the fraction of translation scenarios requiring user help at least once, is $0\%$, $2.5\%$ and $4\%$, when $1-\alpha$ was $95\%$, $97\%$ and $99\%$ respectively. The increasing trend of $H_f$ is also expected as higher $1-\alpha$ values require more help. This monotonic increase is again expected, but note that both the help rates and $H_f$ remain sufficiently low. 

Most mistakes made by the LLMs stem from incorrect placement of parentheses. For example, for the following NL task `Go to monument 3, take a picture and stay there', the primary model generated the formula \textit{$\lozenge$(monument\_3 $\wedge$ photo)$\wedge\square$ monument\_3, instead of $\lozenge$(monument\_3 $\wedge$ photo$\wedge\square$monument\_3)}. \textcolor{black}{However, this problem can be bypassed by using prefix LTL formula, although it would require rewriting the prompts accordingly.} Another common error was the generation of nonsensical APs that still conformed to the AP generation rules.

\subsection{Effect of the Auxiliary LLM and Number $m$ of Queries}\label{sec:auxModelHelp} 

In what follows, we empirically demonstrate the help-rate benefits gained by incorporating an auxiliary model into our translation framework. 
We repeat the experiments of Section \ref{sec:alphaVal} using the exact same calibration and validation data, but with the auxiliary model entirely removed from both calibration and test-time operation. In this setting, during calibration, the NCS in \eqref{eq:NCS} is computed solely using the primary model $\psi_{\text{p}}$, and at test time the prediction set in \eqref{eq:predSetStep} simplifies to $\ccalC(\ell(k))=\ccalC(\ell(k),\psi_{\text{p}})$ for all $k$. As a consequence, whenever the primary model is uncertain—i.e., when $|\ccalC(\ell(k))|>1$—the algorithm must directly request help from a user. Following the same reasoning as in Section \ref{sec:theory}, this framework still achieves the target translation success rate $1-\alpha$. However, as expected, removing the auxiliary model may substantially increase both the user help rate and the help frequency $H_f$.

The performance of ConformalNL2LTL without an auxiliary model is reported in the second row of Table \ref{table:cp_comp}. When $1-\alpha$ is $95\%$, $97\%$, and $99\%$, the translation success rates are $95.33\%$, $97\%$, and $98.2\%$, respectively, which are consistent with Th. \ref{thm1}, confirming that the auxiliary model does not affect the theoretical success rate guarantees. In contrast, the corresponding user help rates rise to $3.41\%$, $4.03\%$, and $7.43\%$, which are significantly higher than those obtained when using an auxiliary model (see Section \ref{sec:alphaVal}). Likewise, $H_f$ increases significantly to $30\%$, $36\%$, and $36.54\%$ for $1-\alpha=95\%, 97\%, 99\%$, respectively. These empirical results highlight the primary role of the auxiliary LLM which is to reduce the uncertainty in selecting response $s(k)$. This in turn decreases both the overall user help rate and the fraction of formulas requiring user intervention.

In addition, to show the influence of $m$ on the help rate, we also ran experiments of ConformalNL2LTL without the auxiliary model for $1-\alpha=95\%$ and $m=5$ (recall that all previous experiments used $m=10$). We observed that the user help rate increased from $3.41\%$ to $4.71\%$ and that $H_f$ increased from $30\%$ to  $58\%$. The empirical translation rate is $95.15\%$ validating Thm. \ref{thm1}. The increase in help rates and $H_f$ is expected as lower $m$ results in a more inaccurate estimate of the LLM's confidence as discussed in Section \ref{sec:exp_nltask}.

\vspace{-0.4cm}
\textcolor{black}{\subsection{Comparative Evaluations against Translation Baselines}\label{sec:compareTranslate}}

We compare ConformalNL2LTL against three baselines: (A) an uncertainty-agnostic (UA) variant of ConformalNL2LTL; (B) Lang2LTL \cite{liu2023grounding}; and (C) nl2spec \cite{cosler2023nl2spec}; 
{Below, we provide an overview of these baselines followed by a discussion on how we configured them to ensure fair comparisons.}

(A) The UA version of ConformalNL2LTL, called UA-NL2LTL, completely removes CP, making decisions solely based on the mis-calibrated confidence scores of the primary LLM. Thus, at every step $k$, UA-NL2LTL picks the response $s(k)= \arg \max_{\bar s\in\Bar{\mathcal{S}}^{\psi_{\text{p}}}(k)} F(\bar s|\ell(k), m_k,\mathcal{S}^{\psi_{\text{p}}}(k))$. As a result, this baseline never asks for help, \textcolor{black}{neither from the auxiliary LLM, nor from the user}.

(B) Lang2LTL employs a modular translation pipeline consisting of: (a) an expression recognition module that extracts landmark-related substrings from the NL task using LLMs; (b) a grounding module that maps each entity to an AP; and (c) a lifted translation module that substitutes APs with placeholder symbols before generating the LTL formula using LLMs. 

(C) nl2spec uses components from `chain-of-thought' prompting to allow the LLM to reason about its response to raise its own accuracy. The framework first generates sub-translations by using the LLM which are then edited or removed by a user, who is also allowed to add subtranslations\footnote{We would like to note that ConformalNL2LTL can detect when help is required, while nl2spec does not.}. \textcolor{black}{A sub-translation provides context for proper formalization by providing translation examples for part of the NL task.} The LLM is then queried $M$ times using the subtranslations and the prompt, with the user providing help in the loop. \textcolor{black}{The translation with the highest confidence among the $M$ responses is displayed, along with alternative translations.}

To ensure fair comparisons, we have set up the baselines as follows:
(i) UA-NL2LTL is set up using the primary model of ConformalNL2LTL (i.e., GPT 4o). UA-NL2LTL and ConformalNL2LTL share exactly the same prompts to query the primary model.
(ii) We provide the translation rules in part (a) of our prompt (see Fig \ref{fig:prompt}) to Lang2LTL, and nl2spec such that all methods share the same AP recognition structure with ConformalNL2LTL. 
(iii) We also augmented the list of few-shot examples provided in Lang2LTL and nl2spec with ours. 
(iv) We do not allow the user to add, edit or remove sub-translations for nl2spec as this would lead to an unfair comparison with other baselines, which do not get help from the user. {Moreover, nl2spec lacks a mechanism to detect when assistance is needed and therefore requires user involvement in all translation scenarios in order to add, remove, or edit sub-translations. In contrast, ConformalNL2LTL explicitly reasons about when assistance is required to achieve a user-defined translation success rate. Additionally, the nature of user assistance differs fundamentally between the two approaches: nl2spec relies on user-specified sub-translations, whereas ConformalNL2LTL only requests the user to select among candidate responses at specific QA steps. Consequently, a direct comparison of help rates between nl2spec and ConformalNL2LTL would neither be fair nor meaningful}
We note that modifications (ii) and (iii) improved the performance of the baselines, as without them the {LLM frequently produced nonsensical APs}.

\begin{table}[t]
    \centering
    \caption{Translation Success Rates of Baselines}\vspace{-0.3cm}
    \label{table:baseline_comp}
    \begin{tabular}{cc}
        \toprule
         & Success Rates \\
        \midrule
        UA-NL2LTL & $87.4\%$ \\
        Lang2LTL w/ Few-Shot & $82.87\%$ \\
        Lang2LTL w/o Few-Shot & $53.53\%$ \\
        nl2spec & $73.16\%$ \\
        CMAS & $88\%$ \\
        \bottomrule
    \end{tabular}  
    \vspace{-0.6cm}
\end{table}

We report the translation success rate of these baselines in Table \ref{table:baseline_comp} over $182$ NL tasks, {sampled from the distribution $\ccalD$} introduced in Sec. \ref{sec:exp_nltask}, consisting of $71$ easy, $75$ medium and $37$ hard scenarios. The overall translation success rates were $87.4\%$ for UA-NL2LTL, $82.87\%$ for Lang2LTL with {our} few shot examples, $53.53\%$ for Lang2LTL without {our} few shot examples, and $73.16\%$ for nl2spec. Observe that UA-NL2LTL tends to outperform the baselines. However, the latter may be a case- and implementation-specific result. In addition, we observe that the performance of Lang2LTL dropped significantly when operating with its original few shot examples justifying the modification (iii) mentioned earlier. Finally, we emphasize that ConformalNL2LTL surpasses all these baselines in terms of translation success rates, as it can achieve user-specified $1-\alpha$ success levels while requiring minimal user intervention (see Table~\ref{table:cp_comp}).

%


\textcolor{black}{\subsection{Comparative Evaluations against NL-based Planners}\label{sec:comparePlanner}}

We compare ConformalNL2LTL against CMAS, an NL-based planner that maps NL instructions directly to robot plans without generating intermediate LTL formulas \cite{chen2023scalable}. Next, we discuss how CMAS and ConformalNL2LTL are configured.

{CMAS is developed for teams of $N$ robots, but we evaluate it in a single-robot setting ($N=1$). The output of CMAS is a robot plan defined as a sequence of actions generated to satisfy the input NL instructions. To ensure fair comparisons, we provide CMAS with (i) \textcolor{black}{an action space that shares the same structure as the AP recognition component of ConformalNL2LTL}, and
(ii) the same few-shot examples included in our prompt, i.e., the same NL tasks accompanied by the corresponding true plan.}  \textcolor{black}{Our definition of the action space in (i) ensures that the plans generated by CMAS are sequences of APs that need to be satisfied. As a result, both CMAS and ConformalNL2LTL reason at the same level of abstraction, avoiding unfair comparisons such as requiring CMAS to produce lower-level outputs (e.g., waypoints).}
%
{ConformalNL2LTL can be paired with any complete LTL planner to convert the generated LTL formulas into robot plans such as \cite{luo2021abstraction}. 

We then evaluate the plan success rate, defined as the proportion of plans successfully accomplish the specified NL task. 
Notice that the plan success rate of our method (ConformalNL2LTL \& LTL planner) is identical to the translation success rate of ConformalNL2LTL given that the employed planner, proposed in \cite{luo2021abstraction}, is correct and complete. The same applies to all translation baselines considered in Section \ref{sec:compareTranslate}.
The plan success rates for CMAS  was $88\%$ over the $182$ NL tasks used in Section \ref{sec:compareTranslate}. Note this success rate is comparable to the one that would have been achieved by integrating UA-NL2LTL with an LTL planner ($87.4\%$). Finally, we emphasize that, as in Section \ref{sec:compareTranslate}, our framework (ConformalNL2LTL \& LTL planner) outperforms CMAS as it can achieve user-defined $1-\alpha$ success rates with minimal human intervention.} Demonstrations of ConformalNL2LTL, integrated with the LTL planner proposed in \cite{luo2021abstraction}, in real-world navigation and mobile manipulation tasks are provided in \cite{video_demo}

\subsection{Empirical Evaluation in Out-Of-Distribution Settings}\label{sec:ood}
In this section, we evaluate the performance of ConformalNL2LTL in OOD translation scenarios. Specifically, we consider test scenarios generated from an unknown distribution $\hat{\ccalD}\neq \ccalD$ which is inaccessible and therefore cannot be used to construct calibration datasets. As a result, the prediction sets $\ccalC(\ell_{\text{test}}(k))$ in \eqref{eq:predSetStep} are constructed using calibration data sampled from $\ccalD$ while translation problems at test time are sampled from $\hat{\ccalD}$.


\textcolor{black}{\textbf{Distribution $\hat{\ccalD}$}:
The distribution $\hat{\ccalD}$ generates scenarios $\sigma=\{\xi,\ccalA\}$ associated with the following three types.}
(A) The first type refers to \textit{communication tasks} with action space $\ccalA_1=\{${send sensor data},{ relay a message},{ receive confirmation from agent i},{ trigger alert}$\}$. The  rules provided to the LLM for AP construction are: (i) `send$\_$data' which is true when the sensor data is sent, (ii) `relay$\_$msg' which is true when the system relays a message it received, (iii) `confirm$\_$i' that is true when the system gets confirmation from agent i, and (iv) `alert' that is true when an alert is set off. An example NL task in this category is {`Send sensor data as soon as you get confirmation from agent 4'.}
%
(B) The second type refers to \textit{human robot interaction tasks} with action space $\ccalA_2=\{${detect question},{ answer the question},{ display a greeting},{ receive hello},{ play a happy animation},{ play some audio}$\}$. The APs are required to have the following format: (i) `question' that is true when the robot detects a question from a user, (ii) `answer' which is true after the robot has answered a question, (iii) `display$\_$greeting' that is satisfied when the robot displays a greeting, (iv) `hello' that is true when the robot detects a person saying hello, (v) `happy$\_$animation' that is true when a happy animation is displayed, and (vi) `audio' which is true when playing the audio. A typical NL task in this category is `Display the happy animation, say hello and then answer the question provided by the user'.
(C) \textcolor{black}{The third type refers to \textit{exploration and information gathering tasks} with action space $\ccalA_3=\{${go to region X},{ explore},{ detect low map uncertainty}$\}$.  The APs are required to have the following format: (i) `region$\_$X' that is true when the robot reaches region X, (ii) `explore' that is true when switching to an exploration policy for expanding the map, (iii) `uncertainty$\_$low' which is true when the map uncertainty is low. An example of NL task in this category is `Go to region A, and switch to exploration mode if the map uncertainty is not low'.}

\textcolor{black}{We built a dataset of $200$ translation scenario–LTL pairs containing $73$ \textit{easy}, $88$ \textit{medium}, and $39$ \textit{hard} scenarioss using the APs and action spaces described above. The categorization into easy, medium, and hard follows the procedure in Section~\ref{sec:exp_nltask}. The distribution $\hat{\ccalD}$ is defined uniformly over this dataset. We emphasize again that neither $\ccalD$ nor $\hat{\ccalD}$ is known to our framework, and more complex distributions could also be considered.}

\begin{table}[t]
    \centering
    \caption{Empirical Translation Success Rates and Help Rates for OOD scenarios} \vspace{-0.3cm}
    \label{table:ood}
    \begin{tabular}{cccc}
        \toprule
        \makecell{$1-\alpha$\\ (\%)} & \makecell{Empirical \\ Success Rate (\%)} & \makecell{Help Rate \\ (\%)} & \makecell{$H_f$ \\ (\%)}\\
        \midrule
        95 & 79.5 & 0.37 & 3.5 \\
        97 & 80.5 & 0.40 & 4 \\
        99 & 84.5 & 0.67 & 6 \\
        \bottomrule
    \end{tabular}
    \vspace{-0.6cm}
\end{table}

In Table \ref{table:ood}, 
we report the performance of ConformalNL2LTL. We sample $20$ scenarios from $\hat{\ccalD}$. Alg. \ref{alg:translation} then generates the LTL specifications, which we manually check for accuracy. {We repeat the experiments conducted in Section \ref{sec:alphaVal} for the validation scenarios sampled from $\hat{\ccalD}$ {and calibration scenarios sampled from $\ccalD$}.}
Observe that for $1-\alpha=95\%,97\%,$ and $99\%$ the empirical translation success rates are $79.5\%,80.5\%,$ and $84.5\%$, respectively. This demonstrates that the translation success guarantees do not hold under distribution shifts; nevertheless, ConformalNL2LTL still performs reasonably well given the severity of the shift. The help rate and $H_f$ also continue to increase as $1-\alpha$ increases, consistent with trends observed earlier.

We also noticed that most translation errors occur mainly due to the absence of the ground truth response in either or both of the generated prediction sets $\ccalC(\ell(k),\psi_\text{p})$ and $\ccalC(\ell(k),\psi_{\text{aux}})$. This is expected as the probabilistic inclusion guarantees provided by CP no longer apply in OOD. A potential approach to enable CP to handle distribution shifts is to leverage robust CP to obtain valid prediction sets for all distributions $\hat{\ccalD}$ that are `close' to $\ccalD$ (based on the $f-$divergence) \cite{cauchois2024robust}. Integrating ConformalNL2LTL with robust CP is however out of the scope of this work and part of our future work.

\section{Conclusions and Future Work}
This paper introduced ConformalNL2LTL, a novel NL-to-LTL translation framework. A key contribution of our approach is its ability to achieve user-defined translation success rates {while minimally asking for help from users.} Future work will focus on (i) relaxing the i.i.d. assumption required in Theorem \ref{thm1}; and (ii) formally integrating ConformalNL2LTL with existing multi-robot LTL planners. 



\bibliographystyle{IEEEtran}
\bibliography{YK_bib.bib}

@INPROCEEDINGS{9197464,
  author={Kuo, Yen-Ling and Katz, Boris and Barbu, Andrei},
  booktitle={2020 IEEE International Conference on Robotics and Automation (ICRA)}, 
  title={Deep compositional robotic planners that follow natural language commands}, 
  year={2020},
  volume={},
  number={},
  pages={4906-4912},
  keywords={Robots;Task analysis;Planning;Natural languages;Aerospace electronics;Cognition;Space exploration},
  doi={10.1109/ICRA40945.2020.9197464}}

@book{baier2008principles,
  title={Principles of model checking},
  author={Baier, Christel and Katoen, Joost-Pieter},
  volume={26202649},
  year={2008},
  publisher={MIT press Cambridge}
}

@misc{code_demo,
  title={Software, and Datasets for ConformalNL2LTL},
  howpublished={\texttt{\url{https://anonymous.4open.science/r/ConformalNL2LTL}}.}
}

@misc{video_demo,
title={Demonstrations for ConformalNL2LTL}, howpublished={\texttt{\url{https://vimeo.com/1165068786?share=copy&fl=sv&fe=ci}}.}
}

@inproceedings{vasile2013sampling,
  title={Sampling-based temporal logic path planning},
  author={Vasile, Cristian Ioan and Belta, Calin},
  booktitle={ IEEE/RSJ International Conference on Intelligent Robots and Systems},
  pages={4817--4822},
  year={November 2013},
address={Tokyo, Japan}
}

@article{kantaros2020stylus,
  title={Stylus*: A temporal logic optimal control synthesis algorithm for large-scale multi-robot systems},
  author={Kantaros, Yiannis and Zavlanos, Michael M},
  journal={The International Journal of Robotics Research},
  volume={39},
  number={7},
  pages={812--836},
  year={2020},
  publisher={SAGE Publications Sage UK: London, England}
}

@article{luo2021abstraction,
  title={An abstraction-free method for multirobot temporal logic optimal control synthesis},
  author={Luo, Xusheng and Kantaros, Yiannis and Zavlanos, Michael M},
  journal={IEEE Transactions on Robotics},
  volume={37},
  number={5},
  pages={1487--1507},
  year={2021}
}

@article{kantaros2022perception,
  title={Perception-based temporal logic planning in uncertain semantic maps},
  author={Kantaros, Yiannis and Kalluraya, Samarth and Jin, Qi and Pappas, George J},
  journal={IEEE Transactions on Robotics},
  volume={38},
  number={4},
  pages={2536--2556},
  year={2022},
  publisher={IEEE}
}

@inproceedings{hasanbeig2019reinforcement,
  title={Reinforcement Learning for Temporal Logic Control Synthesis with Probabilistic Satisfaction Guarantees},
  author={Hasanbeig, Mohammadhosein and Kantaros, Yiannis and Abate, Alessandro and Kroening, Daniel and Pappas, George J. and Lee, Insup},
  booktitle={Proc. of the 58th IEEE Conference on Decision and Control (CDC)},
  year={December 2019},
  address={Nice, France}
}

@article{tumova2016multi,
  title={Multi-agent planning under local LTL specifications and event-based synchronization},
  author={Tumova, Jana and Dimarogonas, Dimos V},
  journal={Automatica},
  volume={70},
  pages={239--248},
  year={2016},
  publisher={Elsevier}
}

@article{ahn2022can,
  title={Do as i can, not as i say: Grounding language in robotic affordances},
  author={Ahn, Michael and Brohan, Anthony and Brown, Noah and Chebotar, Yevgen and Cortes, Omar and David, Byron and Finn, Chelsea and Fu, Chuyuan and Gopalakrishnan, Keerthana and Hausman, Karol and others},
  journal={CoRL},
  year={2022}
}

@inproceedings{kumar2023conformal,
  title={Conformal Prediction with Large Language Models for Multi-Choice Question Answering},
  author={Kumar, Bhawesh and Lu, Charlie and Gupta, Gauri and Palepu, Anil and Bellamy, David and Raskar, Ramesh and Beam, Andrew},
  booktitle={ICML (Neural Conversational AI TEACH) workshop},
  year={2023}
}

@article{angelopoulos2021gentle,
  title={A gentle introduction to conformal prediction and distribution-free uncertainty quantification},
  author={Angelopoulos, Anastasios N and Bates, Stephen},
  journal={arXiv preprint arXiv:2107.07511},
  year={2021}}

@article{ren2023robots,
      title={Robots That Ask For Help: Uncertainty Alignment for Large Language Model Planners}, 
      author={Allen Z. Ren and Anushri Dixit and Alexandra Bodrova and Sumeet Singh and Stephen Tu and Noah Brown and Peng Xu and Leila Takayama and Fei Xia and Jake Varley and Zhenjia Xu and Dorsa Sadigh and Andy Zeng and Anirudha Majumdar},
      journal={Conference on Robot Learning},
      year={2023}
}

@article{angelopoulos2020uncertainty,
  title={Uncertainty sets for image classifiers using conformal prediction},
  author={Angelopoulos, Anastasios and Bates, Stephen and Malik, Jitendra and Jordan, Michael I},
  journal={International Conference on Learning Representations (ICLR)},
  year={2021},
  publisher={IEEE}
}

@article{lindemann2023safe,
  title={Safe planning in dynamic environments using conformal prediction},
  author={Lindemann, Lars and Cleaveland, Matthew and Shim, Gihyun and Pappas, George J},
  journal={IEEE Robotics and Automation Letters},
  year={2023},
  publisher={IEEE}
}

@article{chen2023scalable,
  title={Scalable Multi-Robot Collaboration with Large Language Models: Centralized or Decentralized Systems?},
  author={Chen, Yongchao and Arkin, Jacob and Zhang, Yang and Roy, Nicholas and Fan, Chuchu},
  journal={International Conference on Robotics and Automation},
  year={2024}
}

@article{zhang2023building,
  title={Building cooperative embodied agents modularly with large language models},
  author={Zhang, Hongxin and Du, Weihua and Shan, Jiaming and Zhou, Qinhong and Du, Yilun and Tenenbaum, Joshua B and Shu, Tianmin and Gan, Chuang},
  journal={International Conference on Learning Representations (ICLR)},
  year={2024}
}

@article{hong2023metagpt,
  title={Metagpt: Meta programming for multi-agent collaborative framework},
  author={Hong, Sirui and Zheng, Xiawu and Chen, Jonathan and Cheng, Yuheng and Wang, Jinlin and Zhang, Ceyao and Wang, Zili and Yau, Steven Ka Shing and Lin, Zijuan and Zhou, Liyang and others},
  journal={International Conference on Learning Representations},
  year={2024}
}

@inproceedings{sun2023conformal,
  title={Conformal prediction for uncertainty-aware planning with diffusion dynamics model},
  author={Sun, Jiankai and Jiang, Yiqi and Qiu, Jianing and Nobel, Parth Talpur and Kochenderfer, Mykel and Schwager, Mac},
  booktitle={NeurIPS},
  year={2023}
}

@article{shafer2008tutorial,
  title={A Tutorial on Conformal Prediction.},
  author={Shafer, Glenn and Vovk, Vladimir},
  journal={Journal of Machine Learning Research},
  volume={9},
  number={3},
  year={2008}
}

@article{sadinle2019least,
  title={Least ambiguous set-valued classifiers with bounded error levels},
  author={Sadinle, Mauricio and Lei, Jing and Wasserman, Larry},
  journal={Journal of the American Statistical Association},
  volume={114},
  number={525},
  pages={223--234},
  year={2019},
  publisher={Taylor \& Francis}
}

@inproceedings{dixit2024perceive,
  title={Perceive With Confidence: Statistical Safety Assurances for Navigation with Learning-Based Perception},
  author={Dixit, Anushri and Mei, Zhiting and Booker, Meghan and Storey-Matsutani, Mariko and Ren, Allen Z and Majumdar, Anirudha},
  booktitle={CoRL},
  year={2024}
}

@article{yang2024text2reaction,
  title={Text2Reaction: Enabling Reactive Task Planning Using Large Language Models},
  author={Yang, Zejun and Ning, Li and Wang, Haitao and Jiang, Tianyu and Zhang, Shaolin and Cui, Shaowei and Jiang, Hao and Li, Chunpeng and Wang, Shuo and Wang, Zhaoqi},
  journal={IEEE Robotics and Automation Letters},
  year={2024}
}

@article{su2024collaborative,
  title={Collaborative multi-object tracking with conformal uncertainty propagation},
  author={Su, Sanbao and Han, Songyang and Li, Yiming and Zhang, Zhili and Feng, Chen and Ding, Caiwen and Miao, Fei},
  journal={IEEE Robotics and Automation Letters},
  year={2024}
}

@article{wang2024safe,
  title={Probabilistically Correct Language-Based Multi-Robot Planning Using Conformal Prediction},
  author={Wang, Jun and He, Guocheng and Kantaros, Yiannis},
  journal={IEEE Robotics and Automation Letters}, 
  year={2025},
  volume={10},
  number={1},
  pages={160-167},
    doi={10.1109/LRA.2024.3504233}
}

@article{cauchois2024robust,
  title={Robust validation: Confident predictions even when distributions shift},
  author={Cauchois, Maxime and Gupta, Suyash and Ali, Alnur and Duchi, John C},
  journal={Journal of the American Statistical Association},
  pages={1--66},
  year={2024},
  publisher={Taylor \& Francis}
}

@inproceedings{chen2024autotamp,
  title={Autotamp: Autoregressive task and motion planning with llms as translators and checkers},
  author={Chen, Yongchao and Arkin, Jacob and Dawson, Charles and Zhang, Yang and Roy, Nicholas and Fan, Chuchu},
  booktitle={2024 IEEE International conference on robotics and automation (ICRA)},
  pages={6695--6702},
  year={2024},
  organization={IEEE}
}

@inproceedings{cosler2023nl2spec,
  title={nl2spec: interactively translating unstructured natural language to temporal logics with large language models},
  author={Cosler, Matthias and Hahn, Christopher and Mendoza, Daniel and Schmitt, Frederik and Trippel, Caroline},
  booktitle={International Conference on Computer Aided Verification},
  pages={383--396},
  year={2023},
  organization={Springer}
}

@article{heracles,
author = {Wang, Jun and Tong, Jiaming and Tan, Kaiyuan and Vorobeychik, Yevgeniy and Kantaros, Yiannis},
title = {Conformal Temporal Logic Planning using Large Language Models},
year = {2025},
publisher = {Association for Computing Machinery},
address = {New York, NY, USA},
issn = {2378-962X},
url = {https://doi.org/10.1145/3769111},
doi = {10.1145/3769111},
abstract = {This paper addresses temporal logic task planning problems for mobile robots. We consider missions that require accomplishing multiple high-level sub-tasks, expressed in natural language (NL), in a temporal and logical order. To formally define the mission, we treat these sub-tasks as atomic predicates in a Linear Temporal Logic (LTL) formula. We refer to this task specification framework as LTL-NL. Our goal is to design plans, defined as sequences of robot actions, accomplishing LTL-NL tasks. This action planning problem cannot be solved directly by existing LTL planners due to the NL nature of atomic predicates. Therefore, we propose HERACLEs, a hierarchical neuro-symbolic planner that relies on a novel integration of (i) existing symbolic planners generating high-level task plans determining the order at which the NL sub-tasks should be accomplished; (ii) pre-trained Large Language Models (LLMs) to design sequences of robot actions for each sub-task in these task plans; and (iii) conformal prediction acting as a formal interface between (i) and (ii) and managing uncertainties due to LLM imperfections. We show, both theoretically and empirically, that HERACLEs can achieve user-defined mission success rates. We demonstrate the efficiency of HERACLEs through comparative numerical experiments against recent LLM-based planners as well as hardware experiments on mobile manipulation tasks. Finally, we present examples demonstrating that our approach enhances user-friendliness compared to conventional symbolic approaches.},
note = {Just Accepted},
journal = {ACM Trans. Cyber-Phys. Syst.},
month = sep,
keywords = {Neuro-symbolic Planning, Linear Temporal Logic, Large Language Models, Conformal Prediction}
}

@inproceedings{fuggitti2023nl2ltl,
  title={NL2LTL--a python package for converting natural language (NL) instructions to linear temporal logic (LTL) formulas},
  author={Fuggitti, Francesco and Chakraborti, Tathagata},
  booktitle={Proceedings of the AAAI Conference on Artificial Intelligence},
  volume={37},
  number={13},
  pages={16428--16430},
  year={2023}
}

@article{chen2023nl2tl,
  title={NL2TL: Transforming Natural Languages to Temporal Logics using Large Language Models},
  author={Chen, Yongchao and Gandhi, Rujul and Zhang, Yang and Fan, Chuchu},
  journal={Empirical Methods in Natural Language Processing (EMNLP)},
  year={2023}
}

@inproceedings{liu2023grounding,
  title={Grounding complex natural language commands for temporal tasks in unseen environments},
  author={Liu, Jason Xinyu and Yang, Ziyi and Idrees, Ifrah and Liang, Sam and Schornstein, Benjamin and Tellex, Stefanie and Shah, Ankit},
  booktitle={Conference on Robot Learning},
  pages={1084--1110},
  year={2023},
  organization={PMLR}
}

@article{su2024api,
  title={API Is Enough: Conformal Prediction for Large Language Models Without Logit-Access},
  author={Su, Jiayuan and Luo, Jing and Wang, Hongwei and Cheng, Lu},
  journal={ACL Rolling Review (ARR)},
  year={2024}
}

@article{wang2022self,
  title={Self-consistency improves chain of thought reasoning in language models},
  author={Wang, Xuezhi and Wei, Jason and Schuurmans, Dale and Le, Quoc and Chi, Ed and Narang, Sharan and Chowdhery, Aakanksha and Zhou, Denny},
  journal={ICLR},
  year={2023}
}

@article{ren2024explore,
  title={Explore until Confident: Efficient Exploration for Embodied Question Answering},
  author={Ren, Allen Z and Clark, Jaden and Dixit, Anushri and Itkina, Masha and Majumdar, Anirudha and Sadigh, Dorsa},
  journal={Robotics: Science and Systems (RSS)},
  year={2024}
}

@inproceedings{xu2024learning,
  title={Learning from Failures: Translation of Natural Language Requirements into Linear Temporal Logic with Large Language Models},
  author={Xu, Yilongfei and Feng, Jincao and Miao, Weikai},
  booktitle={IEEE 24th International Conference on Software Quality, Reliability and Security},
  pages={204--215},
  year={2024},
}

@inproceedings{gujarathi2022mt,
  title={MT*: Multi-robot path planning for temporal logic specifications},
  author={Gujarathi, Dhaval and Saha, Indranil},
  booktitle={IEEE/RSJ International Conference on Intelligent Robots and Systems (IROS)},
  pages={13692--13699},
  year={2022}
}

@article{liu2024time,
  title={Time minimization and online synchronization for multi-agent systems under collaborative temporal logic tasks},
  author={Liu, Zesen and Guo, Meng and Li, Zhongkui},
  journal={Automatica},
  volume={159},
  pages={111377},
  year={2024},
  publisher={Elsevier}
}

@article{fang2024continuous,
  title={Continuous Execution of High-Level Collaborative Tasks for Heterogeneous Robot Teams},
  author={Fang, Amy and Yin, Tenny and Lin, Jiawei and Kress-Gazit, Hadas},
  journal={arXiv preprint arXiv:2406.18019},
  year={2024}
}

@article{guo2015multi,
  title={Multi-agent plan reconfiguration under local LTL specifications},
  author={Guo, Meng and Dimarogonas, Dimos V},
  journal={The International Journal of Robotics Research},
  volume={34},
  number={2},
  pages={218--235},
  year={2015},
  publisher={SAGE Publications Sage UK: London, England}
}

@inproceedings{livingston2013patching,
  title={Patching task-level robot controllers based on a local $\mu$-calculus formula},
  author={Livingston, Scott C and Prabhakar, Pavithra and Jose, Alex B and Murray, Richard M},
  booktitle={IEEE International Conference on Robotics and Automation},
  year={2013}
}

@inproceedings{kalluraya2023resilient,
  title={Resilient temporal logic planning in the presence of robot failures},
  author={Kalluraya, Samarth and Pappas, George J and Kantaros, Yiannis},
  booktitle={62nd IEEE Conference on Decision and Control (CDC)},
  pages={7520--7526},
  year={2023}
}

@inproceedings{Kalluraya2023multi,
  title={Multi-robot Mission Planning in Dynamic Semantic Environments},
  author={Kalluraya, Samarth and  Pappas, George J and Kantaros, Yiannis},
  booktitle={IEEE International Conference on Robotics and Automation (ICRA)},
  pages={},
  year={2023}
}

@article{articleT-ASE,
author = {Kloetzer, Marius and Mahulea, Cristian},
year = {2015},
month = {10},
pages = {1-14},
title = {LTL-Based Planning in Environments With Probabilistic Observations},
volume = {12},
journal = {IEEE Transactions on Automation Science and Engineering},
doi = {10.1109/TASE.2015.2454299}
}

@ARTICLE{zhou2023vision,
  author={Zhou, Zhangli and Wang, Shaochen and Chen, Ziyang and Cai, Mingyu and Wang, Hao and Li, Zhijun and Kan, Zhen},
  journal={IEEE Transactions on Automation Science and Engineering}, 
  title={Local Observation Based Reactive Temporal Logic Planning of Human-Robot Systems}, 
  year={2025},
  volume={22},
  number={},
  pages={643-655},
  keywords={Task analysis;Robots;Planning;Collaboration;Behavioral sciences;Manufacturing;Cameras;Reactive planning;linear temporal logic;human-robot collaboration;intelligent manufacturing},
  doi={10.1109/TASE.2023.3304842}}

@article{wang2023task,
  title={Task-Driven Reinforcement Learning With Action Primitives for Long-Horizon Manipulation Skills},
  author={Wang, Hao and Zhang, Hao and Li, Lin and Kan, Zhen and Song, Yongduan},
  journal={IEEE Transactions on Cybernetics},
  year={2023}
}

@inproceedings{balakrishnan2023model,
  title={Model-Free Reinforcement Learning for Spatiotemporal Tasks Using Symbolic Automata},
  author={Balakrishnan, Anand and Jak{\v{s}}i{\'c}, Stefan and Aguilar, Edgar A and Ni{\v{c}}kovi{\'c}, Dejan and Deshmukh, Jyotirmoy V},
  booktitle={IEEE CDC},
  year={2023}
}

@inproceedings{Feifei2022failure,
  title={Failure-Robust Multi-Robot Tasks Planning under Linear Temporal Logic Specifications},
  author={Huang, Feifei and Yin, Xiang and Li, Shaoyuan},
  booktitle={13th Asian Control Conference (ASCC 2022)},
  pages={},
  year={2022},
  organization={IEEE}
}

@INPROCEEDINGS{Zhou2022Reactive,
  author={Zhou, Ziyi and Lee, Dong Jae and Yoshinaga, Yuki and Balakirsky, Stephen and Guo, Dejun and Zhao, Ye},
  booktitle={IEEE International Conference on Automation Science and Engineering (CASE)}, 
  title={Reactive Task Allocation and Planning for Quadrupedal and Wheeled Robot Teaming}, 
  year={2022},}

@INPROCEEDINGS{Faruq2018Simultaneous,
  author={Faruq, Fatma and Parker, David and Laccrda, Bruno and Hawes, Nick},
  booktitle={IEEE/RSJ International Conference on Intelligent Robots and Systems}, 
  title={Simultaneous Task Allocation and Planning Under Uncertainty}, 
  year={2018},
  volume={},
  number={},
  pages={3559-3564},
  doi={10.1109/IROS.2018.8594404}}

@article{liang2024introspective,
  title={Introspective Planning: Aligning Robots' Uncertainty with Inherent Task Ambiguity},
  author={Liang, Kaiqu and Zhang, Zixu and Fisac, Jaime Fern{\'a}ndez},
  journal={NeurIPS},
  year={2024}
}

@inproceedings{hashemi2023data,
  title={Data-driven reachability analysis of stochastic dynamical systems with conformal inference},
  author={Hashemi, Navid and Qin, Xin and Lindemann, Lars and Deshmukh, Jyotirmoy V},
  booktitle={IEEE Conference on Decision and Control (CDC)},
  year={2023}
}

@ARTICLE{10955702,
  author={Li, Lin and Chen, Ziyang and Wang, Hao and Kan, Zhen},
  journal={IEEE Transactions on Automation Science and Engineering}, 
  title={Task Allocation of Heterogeneous Robots Under Temporal Logic Specifications With Inter-Task Constraints and Variable Capabilities}, 
  year={2025},
  volume={22},
  number={},
  pages={14030-14047},
  keywords={Resource management;Multi-robot systems;Logic;Automata;Planning;Decision trees;Computer architecture;Complexity theory;Automation;Loading;Task allocation;heterogeneous multi-robot systems;linear temporal logic},
  doi={10.1109/TASE.2025.3558977}}

@article{cheng2024efficienteqa,
  title={Efficienteqa: An efficient approach for open vocabulary embodied question answering},
  author={Cheng, Kai and Li, Zhengyuan and Sun, Xingpeng and Min, Byung-Cheol and Bedi, Amrit Singh and Bera, Aniket},
  journal={arXiv preprint arXiv:2410.20263},
  year={2024}
}

@inproceedings{pan2023data,
  title={Data-efficient learning of natural language to linear temporal logic translators for robot task specification},
  author={Pan, Jiayi and Chou, Glen and Berenson, Dmitry},
  booktitle={2023 IEEE International Conference on Robotics and Automation (ICRA)},
  pages={11554--11561},
}

@article{kaur2024addressing,
  title={Addressing Uncertainty in LLMs to Enhance Reliability in Generative AI},
  author={Kaur, Ramneet and Samplawski, Colin and Cobb, Adam D and Roy, Anirban and Matejek, Brian and Acharya, Manoj and Elenius, Daniel and Berenbeim, Alexander M and Pavlik, John A and Bastian, Nathaniel D and others},
  journal={NeurIPS Safe Generative AI Workshop},
  year={2024}
}

@article{kantaros2024sample,
  title={Sample-Efficient Reinforcement Learning with Temporal Logic Objectives: Leveraging the Task Specification to Guide Exploration},
  author={Kantaros, Yiannis and Wang, Jun},
  journal={IEEE Transactions on Automatic Control},
  year={2024},
  publisher={IEEE}
}

@misc{shah2023navigationlargelanguagemodels,
      title={Navigation with Large Language Models: Semantic Guesswork as a Heuristic for Planning}, 
      author={Dhruv Shah and Michael Equi and Blazej Osinski and Fei Xia and Brian Ichter and Sergey Levine},
      year={2023},
      eprint={2310.10103},
      archivePrefix={arXiv},
      primaryClass={cs.RO},
      url={https://arxiv.org/abs/2310.10103}, 
}

@inproceedings{mohammadinejad2024systematic,
  title={Systematic translation from natural language robot task descriptions to stl},
  author={Mohammadinejad, Sara and Paul, Sheryl and Xia, Yuan and Kudalkar, Vidisha and Thomason, Jesse and Deshmukh, Jyotirmoy V},
  booktitle={International Conference on Bridging the Gap between AI and Reality},
  pages={259--276},
  year={2024},
  organization={Springer}
}

@misc{hunt2025surveylanguagebasedcommunicationrobotics,
      title={A Survey of Language-Based Communication in Robotics}, 
      author={William Hunt and Sarvapali D. Ramchurn and Mohammad D. Soorati},
      year={2025},
      eprint={2406.04086},
      archivePrefix={arXiv},
      primaryClass={cs.RO},
      url={https://arxiv.org/abs/2406.04086}, 
}

@article{Pallagani_Muppasani_Roy_Fabiano_Loreggia_Murugesan_Srivastava_Rossi_Horesh_Sheth_2024, title={On the Prospects of Incorporating Large Language Models (LLMs) in Automated Planning and Scheduling (APS)}, volume={34}, url={https://ojs.aaai.org/index.php/ICAPS/article/view/31503}, DOI={10.1609/icaps.v34i1.31503}, abstractNote={Automated Planning and Scheduling is among the growing areas in Artificial Intelligence (AI) where mention of LLMs has gained popularity. Based on a comprehensive review of 126 papers, this paper investigates eight categories based on the unique applications of LLMs in addressing various aspects of planning problems: language translation, plan generation, model construction, multi-agent planning, interactive planning, heuristics optimization, tool integration, and brain-inspired planning. For each category, we articulate the issues considered and existing gaps. A critical insight resulting from our review is that the true potential of LLMs unfolds when they are integrated with traditional symbolic planners, pointing towards a promising neuro-symbolic approach. This approach effectively combines the generative aspects of LLMs with the precision of classical planning methods. By synthesizing insights from existing literature, we underline the potential of this integration to address complex planning challenges. Our goal is to encourage the ICAPS community to recognize the complementary strengths of LLMs and symbolic planners, advocating for a direction in automated planning that leverages these synergistic capabilities to develop more advanced and intelligent planning systems. We aim to keep the categorization of papers updated on https://ai4society.github.io/LLM-Planning-Viz/, a collaborative resource that allows researchers to contribute and add new literature to the categorization.}, number={1}, journal={Proceedings of the International Conference on Automated Planning and Scheduling}, author={Pallagani, Vishal and Muppasani, Bharath Chandra and Roy, Kaushik and Fabiano, Francesco and Loreggia, Andrea and Murugesan, Keerthiram and Srivastava, Biplav and Rossi, Francesca and Horesh, Lior and Sheth, Amit}, year={2024}, month={May}, pages={432-444} }

@inproceedings{huriot2025safe,
  title={Safe decentralized multi-agent control using black-box predictors, conformal decision policies, and control barrier functions},
  author={Huriot, Sacha and Sibai, Hussein},
  booktitle={2025 IEEE International Conference on Robotics and Automation (ICRA)},
  pages={7445--7451},
  year={2025},
  organization={IEEE}
}

@misc{latif20243pllmprobabilisticpathplanning,
      title={3P-LLM: Probabilistic Path Planning using Large Language Model for Autonomous Robot Navigation}, 
      author={Ehsan Latif},
      year={2024},
      eprint={2403.18778},
      archivePrefix={arXiv},
      primaryClass={cs.RO},
      url={https://arxiv.org/abs/2403.18778}, 
}

@INPROCEEDINGS{10160591,
  author={Liang, Jacky and Huang, Wenlong and Xia, Fei and Xu, Peng and Hausman, Karol and Ichter, Brian and Florence, Pete and Zeng, Andy},
  booktitle={2023 IEEE International Conference on Robotics and Automation (ICRA)}, 
  title={Code as Policies: Language Model Programs for Embodied Control}, 
  year={2023},
  volume={},
  number={},
  pages={9493-9500},
  keywords={Feedback loop;Codes;Natural languages;Process control;Detectors;Libraries;Impedance},
  doi={10.1109/ICRA48891.2023.10160591}}

@ARTICLE{10990233,
  author={Hao, Peng and Cui, Shaowei and Wei, Junhang and Lu, Tao and Cai, Yinghao and Wang, Shuo},
  journal={IEEE Transactions on Automation Science and Engineering}, 
  title={Learn-Gen-Plan: Bridging the Gap Between Vision Language Models and Real-World Long-Horizon Dexterous Manipulations}, 
  year={2025},
  volume={22},
  number={},
  pages={15638-15649},
  keywords={Robots;Planning;Adaptation models;Generators;Wiring;Data models;Automation;Libraries;Imitation learning;Reinforcement learning;Large foundation models;long-horizon tasks;robotic dexterous manipulation},
  doi={10.1109/TASE.2025.3567609}}

@ARTICLE{10571830,
  author={Chen, Ziyang and Cai, Mingyu and Zhou, Zhangli and Li, Lin and Kan, Zhen},
  journal={IEEE Transactions on Automation Science and Engineering}, 
  title={Fast Motion Planning in Dynamic Environments With Extended Predicate-Based Temporal Logic}, 
  year={2025},
  volume={22},
  number={},
  pages={5293-5307},
  keywords={Planning;Logic;Task analysis;Robots;Semantics;Dynamics;Automata;Formal methods for robotics and automation;extended predicate-based temporal logic;planning decision tree},
  doi={10.1109/TASE.2024.3418409}}

@ARTICLE{10937075,
  author={Huang, Yuchong and Xu, Ning and Guo, Meng and Li, Jie and Shen, Lincheng},
  journal={IEEE Transactions on Automation Science and Engineering}, 
  title={Distributed and Reactive Controller Synthesis for Multi-Agent Systems Under Finite Horizon Temporal Logic Tasks}, 
  year={2025},
  volume={22},
  number={},
  pages={13485-13500},
  keywords={Synchronization;Collaboration;Robots;Robot kinematics;Decentralized control;Planning;Logic;Automation;Uncertainty;Automata;Linear temporal logic (LTL);controller synthesis;multi-agent system;formal methods},
  doi={10.1109/TASE.2025.3553498}}

@ARTICLE{7778995,
  author={Guo, Meng and Dimarogonas, Dimos V.},
  journal={IEEE Transactions on Automation Science and Engineering}, 
  title={Task and Motion Coordination for Heterogeneous Multiagent Systems With Loosely Coupled Local Tasks}, 
  year={2017},
  volume={14},
  number={2},
  pages={797-808},
  keywords={Collaboration;Robot kinematics;Multi-agent systems;Synchronization;Planning;Indexes;Formal methods;linear temporal logics (LTLs);motion and task coordination;multiagent system},
  doi={10.1109/TASE.2016.2628389}}

\appendices
\section{Background on Conformal Prediction} \label{appendix:CP}
Conformal prediction (CP) is a user-friendly paradigm for creating statistically rigorous uncertainty sets (called prediction sets) for the predictions of pre-trained machine learning models. 
These sets that are guaranteed to contain the ground-truth output with a user-specified probability without making assumptions on the underlying data distribution or the structure of the model.
In what follows, we provide an overview of CP for models used in classification settings as e.g., in
\cite{angelopoulos2020uncertainty,kumar2023conformal}; for further details see \cite{angelopoulos2021gentle}. 

Consider a pre-trained model $\psi$ that maps an input $x$ (e.g., an image or a natural-language question) to one of $K$ discrete labels. We assume that $\psi$ outputs a confidence vector $\psi(x)\in[0,1]^K$, where $\psi(x)_y$ denotes the predicted confidence assigned to a label $y$. Let $y^{\text{gt}}$ denote the ground-truth label associated with input $x$. We also assume that data pairs $(x,y)$ are drawn from an unknown distribution $\ccalD$.
Consider a calibration dataset $\{(x_i,y_i^{\text{gt}})\}_{i=1}^D\sim\mathcal{D}$ and a test input $x_{\text{test}}\sim\ccalD$ with unknown label $y_{\text{test}}^{\text{gt}}$. CP constructs a prediction set $\mathcal{C}(x_{\text{test}})$ such that
\begin{align}\label{equ:CP_guarantee}
  1-\alpha\leq P_{x_\text{test}\sim\ccalD}(y_\text{test}^{\text{gt}}\in\ccalC(x_\text{test}))\leq 1-\alpha+\frac{1}{D+1},  
\end{align}
where $\alpha\in(0,1)$ is a user-specified error level. To construct $\mathcal{C}(x_{\text{test}})$, CP relies on a nonconformity score (NCS), which heuristically quantifies how poorly the model predicts the true label. In classification problems, a common choice of NCS is $\Bar{r}=1-\psi(x_i)_{y^{\text{gt}}}$, which assigns higher scores to inputs $x$ for which the model places low confidence on the true class $y^{\text{gt}}$. The calibration procedure consists of two steps. First, we compute the NCS $\Bar{r}_i=1-\psi(x)_{y_i^{\text{gt}}}$ for each calibration pair $(x_i,y^{gt}_i)$. Second, we compute the $\frac{\lceil(D+1)(1-\alpha)\rceil}{D}$-quantile of the NCSs $\{\Bar{r}_i\}_{i=1}^D$ where $\lceil.\rceil$ is the ceiling function. We denote the resulting quantile by $\Bar{q}$. At test time, given an input $x_{\text{test}}$, we construct the following prediction set $\ccalC(x_\text{test})=\{y:\psi(x_\text{test})_y\geq 1-\Bar{q}\}$ which satisfies \eqref{equ:CP_guarantee}. 

\section{{Conformal Prediction with {Multiple Semantically Equivalent} Formulas}}\label{app:proof}

The CP analysis, provided in Section \ref{sec:cp}, as well as our translation success guarantees (Theorem \ref{thm1}) can be extended to cases where {multiple semantically equivalent formulas} exist. As discussed in Remark \ref{rem:multiFeasFinal}, the key difference lies in the construction of the calibration dataset introduced in Section \ref{sec:cp} where among all LTL formulas that are semantically equivalent to the NL instructions, we select the one  constructed by picking at each step the {response} with the highest frequency. 
%
%
%
CP can then be applied as usual, yielding prediction sets (see \eqref{eq:maxFreqGuarantee}) that contain, with user-specified probability, the semantically equivalent LTL formula (among all other equivalent formulas) constructed using the highest-frequency responses, among all correct responses, at each translation step. Then, {as discussed in Remark \ref{rem:multiFeasFinal},} we emphasize that the guarantees provided in Proposition \ref{prop:new} and Theorem \ref{thm1} hold with respect to that LTL formula. In what follows, we provide a more formal description of how CP is applied to construct prediction sets when multiple semantically equivalent formulas exist, which is adapted from Appendices A3-A4 of \cite{ren2023robots}. 

\textbf{Constructing a Calibration Dataset:} We sample $D\geq 1$ independent calibration scenarios 
$\{\sigma_{i,\text{calib}}\}_{i=1}^D\sim\mathcal{D}$. Each $\sigma_{i,\text{calib}}$ gives rise to a sequence of $H_i \geq 1$ prompts denoted by 
\begin{align}\label{eq:seqProm}
&\bar{\ell}_{i,\text{calib}}=\ell_{i,\text{calib}}(1),\dots,\ell_{i,\text{calib}}(H_i),
\end{align}
where each prompt $\ell_{i,\text{calib}}(k)$ is constructed exactly as in Section \ref{sec:primaryLLM}; the only difference is that part (d) contains partial formulas constructed using all possible correct responses given $\ell_{i,\text{calib}}(k-1)$, for $k\in\{2,\dots,H_i\}$. {Let  $R_i^k\geq 1$ denote the number of correct responses at step $k$. Hereafter, we denote by $\bar{\ccalT}_{i,\text{calib}}(k)=\{s_{i,r,\text{calib}}(k)\}_{r=1}^{R_i^k}$ the set of all feasible/correct responses at step $k$.} Thus, each sequence $\bar{\ell}_{i,\text{calib}}$ is associated with a set of $R_i\geq 1$ {semantically equivalent} LTL formulas $\bar{\Phi}_{i,\text{calib}} = \{\phi_{i,r,\text{calib}} \}_{r=1}^{R_i}$, where
\begin{align}\label{eq:Plans3}
&\phi_{i,r,\text{calib}}=s_{i,r,\text{calib}}(1),\dots,s_{i,r,\text{calib}}(H_i)
\end{align}
representing a syntactically correct LTL formula that is semantically equivalent to the NL task. This gives rise to a calibration dataset $\mathcal{M} = \{(\bar{\ell}_{i,\text{calib}}, \bar{\Phi}_{i,\text{calib}}) \}_{i=1}^D$. 
{As in Section \ref{sec:cp}, this process results in converting the distribution $\ccalD$ into an equivalent distribution $\bar{\ccalD}'$ over prompt sequences discussed as mentioned above.} 

{Given a validation scenario $\sigma_{\text{test}}\sim\ccalD$,} we construct the prompt sequence:
\begin{align}\label{eq:testbarell}
    \bar{\ell}_{\text{test}}=\ell_{\text{test}}(1),\ldots,\ell_{\text{test}}(H_{\text{test}}),
\end{align} where each $\ell_{\text{test}}(k)$ has the same structure as the prompt presented in Section \ref{sec:primaryLLM}, with part (d) containing the partial LTL formulas constructed by using all possible correct responses obtained up until $k-1$. 
Note that $\{\bar{\ell}_{i,\text{calib}}\}_{i=1}^D$ and $\bar{\ell}_{\text{test}}$ are i.i.d, sampled independently from $\bar{\ccalD}'$. 


\textbf{Extracting the Highest-Frequency LTL Formula:}
Consider a function $G$ that is applied to every point in the support of $\bar{\ccalD}'$. This function, given the prompt $\bar{\ell}(k)$, which is part of the sequence $\bar{\ell}\sim\bar{\ccalD}'$, and a set $\bar{\ccalT}(k)$ of feasible responses, returns the response $s\in\bar{\ccalT}(k)$
with the highest frequency $F^{\psi_{\text{p}}}(s|\ccalS(k))$. If $\bar{\ccalT}(k)=\emptyset$, then the user types in the response and $F^{\psi_{\text{p}}}(s|\ccalS(k))=0$. Applying $G$ auto-regressively to all calibration sequences $\bar{\ell}_{i,\text{calib}}$ defined in \eqref{eq:seqProm} yields the following sequence sequence of $H_{i,\text{calib}}$ prompts:
%
%
\begin{align}\label{eq:maxPrompt}
\bar{\ell}_{i,\text{calib}}^{\text{max}}=\ell_{i,\text{calib}}^{\text{max}}(1),\ldots,\ell_{i,\text{calib}}^{\text{max}}(H_i)
\end{align}where each prompt $\ell_{i,\text{calib}}^{\max}(k)$ is the same as  $\ell_{i,\text{calib}}$  with part (d)  containing the formula constructed using the responses {$s_{i,\text{calib}}^{\text{max}}(n)=G(\ell_{i,\text{calib}}^{\text{max}}(n),\bar{\ccalT}_{i,\text{calib}}(n)),\forall n\in\{1,\ldots,k-1\}$. }
Note that the prompt $\ell_{i,\text{calib}}^{\text{max}}(1)$ does not contain any response in part (d). 

The corresponding LTL formula is given by,
\begin{align}\label{eq:maxLTL}
\phi_{i,\text{calib}}^{\text{max}}=s_{i,\text{calib}}^{\text{max}}(1),\ldots,s_{i,\text{calib}}^{\text{max}}(H_i),
\end{align}where $s_{i,\text{calib}}^{\text{max}}(k)=G(\ell_{i,\text{calib}}^{\text{max}}(k),\bar{\ccalT}_{i,\text{calib}}(k))$.
{Observe that} ${\phi}^{\text{max}}_{i,\text{calib}}(k)$ is the LTL formula constructed by always selecting the response with the highest frequency among all correct options, at all steps until $k$. 
This process gives rise to the following calibration dataset $\ccalM''=\{({\ell}^{\text{max}}_{i,\text{calib}},{\phi}^{\text{max}}_{i,\text{calib}})\}_{i=1}^D$.

{\textbf{Constructing Prediction Sets:}} 
Consider a validation scenario $\sigma_{\text{test}}\sim\ccalD$ with unknown corresponding LTL formula. Given a set of calibration scenarios $\{\sigma_{i,\text{calib}}\}_{i=1}^D\sim\ccalD$, we first construct the calibration dataset $\ccalM=\{\bar{\ell}_{i,\text{calib}},\bar{\Phi}_{i,\text{calib}}\}$, where, as discussed earlier, $\bar{\ell}_{i,\text{calib}}\sim\bar{\ccalD}'$. 
From $\ccalM$ we generate the calibration set $\ccalM''=\{({\ell}^{\text{max}}_{i,\text{calib}},{\phi}^{\text{max}}_{i,\text{calib}})\}_{i=1}^D$ by applying $G$ auto-regressively, as described earlier. 
Similarly, we generate the sequence of prompts $\bar{\ell}_{\text{test}}^{\text{max}}=\ell_{\text{test}}^{\text{max}}(1),\ldots,\ell_{\text{test}}^{\text{max}}(H_{\text{test}})$ by applying $G$ auto-regressively to the sequence $\bar{\ell}_{\text{test}}$, defined in \eqref{eq:testbarell}, associated with $\sigma_{\text{test}}$. Then, we construct the following prediction set 
\begin{align}\label{eq:predsetmax}
    \bar{\ccalC}(\bar{\ell}_{\text{test}}^{\text{max}})=\{\phi|\bar{F}(\phi|\bar{\ell}_{\text{test}}^{\text{max}})\geq 1-\bar{q}\},
\end{align} 
where $\bar{F}(\phi|\bar{\ell}_{\text{test}}^{\text{max}})$ is as given in \eqref{eq:barF}, and $\bar{q}$ is the $(1-\alpha)$ quantile of the scores computed for the dataset $\ccalM''$. 

If the validation sequence $\bar{\ell}_{\text{test}}^{\text{max}}$ and the calibration sequences  ${\bar{\ell}}^{\text{max}}_{i,\text{calib}}$ are i.i.d. then the following coverage guarantee holds for $ \bar{\ccalC}(\bar{\ell}_{\text{test}}^{\text{max}})$ (that is `equivalent' to \eqref{eq:CP1})
\begin{align}\label{eq:maxFreqGuarantee}
P({\phi}^{\text{max}}_{\text{test}}\in\bar{\ccalC}(\bar{\ell}^{\text{max}}_{\text{test}}))\geq1-\alpha.
\end{align}
which follows directly from CP. \textcolor{black}{Here, $\phi_{\text{test}}^{\text{max}}$ is the formula constructed using the highest-frequency correct response, among all correct responses in $\ccalS(k)$, at each translation step $k$ (as in \eqref{eq:maxLTL})}. To verify the i.i.d. assumption, recall that both these test and calibration sequences are obtained by applying the measurable function $G$ to every point in the support of $\bar{D}'$, including  $\bar{\ell}_{i,\text{calib}}$ and $\bar{\ell}_{\text{test}}$. Since measurable functions of i.i.d. random variables remain i.i.d., it follows that $\bar{\ell}_{\text{test}}^{\text{max}}$ and $\{\bar{\ell}^{\text{max}}_{i,\text{calib}}\}_{i=1}^D$ are i.i.d. and therefore \eqref{eq:maxFreqGuarantee} holds.

{\textbf{On-the-fly Construction of Prediction Sets:}} As in Section \ref{sec:cp}, the prediction set in \eqref{eq:predsetmax} is constructed after the entire sequence $\bar{\ell}_{\text{test}}^{\text{max}}$ is obtained. However, at test time, the full sequence of prompts is not observed at once. Instead, we can follow exactly the same steps as in Section \ref{sec:cp} to construct the prediction set in \eqref{eq:predsetmax} causally. Consequently, \eqref{eq:predSetStep}–\eqref{eq:causalPred3}, together with all theoretical results in Section~\ref{sec:theory}, hold for the sequence $\bar{\ell}_{\text{test}}^{\text{max}}$ and the corresponding LTL formula $\phi_{\text{test}}^{\text{max}}$. \textcolor{black}{As a result, as discussed in Remark~\ref{rem:multiFeasFinal}, Theorem~\ref{thm1} holds when the user selects the correct response from the prediction set with the highest frequency $F^{\psi_{\text{p}}}(s \mid \ccalS(k))$. This requires ConformalNL2LTL to return both the prediction set and the associated frequencies.}
\begin{IEEEbiography}
{David Smith Sundarsingh}(S'24) is a PhD student in the Department of Electrical and Systems Engineering, Washington University in St. Louis, MO, USA. He received his B.E. in Mechanical Engineering from Coimbatore Institute of Technology, TN, India in 2020, and his M.Tech degree in Robotics and Automation from the Defence Institute of Advanced Technology, MH, India. His research interests include robotics, machine learning, and formal methods.
\end{IEEEbiography}

\vspace{-1cm}
\begin{IEEEbiography}
{Jun Wang}(S'22) is a PhD candidate in the Department of Electrical and Systems Engineering at Washington University in St. Louis. He received his B.Eng. degree in Software Engineering from Sun Yat-Sen University in 2019 and his MSE degree in Robotics from the University of Pennsylvania in 2021. His research interests include robotics, machine learning, and control theory.
\end{IEEEbiography}

\vspace{-1cm}
\begin{IEEEbiography}
    {Jyotirmoy V. Deshmukh} is an Associate Professor of Computer Science in the Thomas Lord Department of Computer Science and the co-director of the USC center for Autonomy and AI. Previously he was a Principal Research Engineer at Toyota motors R \& D. He was a postdoctoral research scholar at the University of Pennsylvania and received his Ph.D from the University of Texas at Austin.
\end{IEEEbiography}

\vspace{-1cm}
\begin{IEEEbiography}
{Yiannis Kantaros}(S'14-M'18) is an Assistant Professor in the Department of Electrical and Systems Engineering, Washington University in St. Louis (WashU), St. Louis, MO, USA. He received the Diploma in Electrical and Computer Engineering in 2012 from the University of Patras, Patras, Greece. He also received the M.Sc. and the Ph.D.  degrees in mechanical engineering from Duke University, Durham, NC, in 2017 and 2018, respectively. Prior to joining WashU, he was a postdoctoral associate in the Department of Computer and Information Science, University of Pennsylvania, Philadelphia, PA. His current research interests include machine learning, distributed control and optimization, and formal methods with applications in robotics. He received the Best Student Paper Award at the 2nd IEEE Global Conference on Signal and Information Processing (GlobalSIP) in 2014 and was a finalist for the Best Multi-Robot Systems Paper at the IEEE International Conference on Robotics and Automation (ICRA) in 2024 and the Best Paper Award at the ACM/IEEE International Conference on Cyber-Physical Systems (ICCPS) in 2025. He also received the 2017-18 Outstanding Dissertation Research Award from the Department of Mechanical Engineering and Materials Science, Duke University, and a 2024 NSF CAREER Award.
\end{IEEEbiography}

\end{document}